\documentclass{article}
\usepackage[preprint]{neurips_2026}
\usepackage{amsmath,amssymb,amsthm}
\usepackage{graphicx}
\usepackage{float}
\usepackage{algorithm}
\usepackage{algpseudocode}
\usepackage{booktabs}
\usepackage{hyperref}
\usepackage{xcolor}
\usepackage{enumitem}
\usepackage[font=small,labelfont=small]{caption}

\newtheorem{theorem}{Theorem}
\newtheorem{proposition}[theorem]{Proposition}

\newtheorem{definition}[theorem]{Definition}

\newcommand{\E}{\mathbb{E}}
\newcommand{\Var}{\mathrm{Var}}
\newcommand{\Cov}{\mathrm{Cov}}
\newcommand{\R}{\mathbb{R}}

\title{COSAC: Counterfactual Credit Assignment in
Sequential Cooperative Teams}

\author{
  Shripad Deshmukh\thanks{Work done during a summer internship at Adobe Research, 2025.} \\
  College of Information and Computer Sciences\\
  University of Massachusetts Amherst\\
  \texttt{svdeshmukh@cs.umass.edu}
  \And
  Jayakumar Subramanian \\
  Adobe Research\\
  San Jose, CA\\
  \texttt{jasubram@adobe.com}
  \And
  Raghavendra Addanki \\
  Adobe Research\\
  San Jose, CA\\
  \texttt{raddanki@adobe.com}
  \And
  Nikos Vlassis\thanks{Corresponding author.} \\
  Adobe Research\\
  San Jose, CA\\
  \texttt{vlassis@adobe.com}
}

\begin{document}
\maketitle

\begin{abstract}
In cooperative teams where agents act in a fixed order and share
a single team-level reward (multi-agent language systems,
sequential robotic tasks), per-agent credit assignment is
under-determined. Critic-based approaches scale poorly as the
number of agents grows owing to the costly maintenance of
joint/factored critic(s), whereas the existing critic-free
alternatives have other issues: common credit across agents that couples every agent's signal to teammate noise, importance-sampling corrections for upstream-update staleness that incur variance exponential in team size, or per-agent counterfactual replay that isolates each agent's effect at the price of extra environment or reward calls.

We propose COSAC, a critic-free per-agent policy gradient for
sequential cooperative teams. COSAC
fits an additive per-agent decomposition of the team reward by
a single ridge regression on the rollout batch (giving each
agent a learning signal decoupled from teammate noise), and
computes each agent's counterfactual advantage from fictitious
continuations of the current policy (policy forward passes that
replace both importance-sampling reweighting and per-agent
environment replay, at no extra environment or reward cost). The
estimator instantiates the \textbf{Sequential Aristocrat
Utility} (SeqAU), our extension of~\citet{wolpert2001optimal}'s
aristocrat utility to sequential teams. We prove bias and
variance bounds on SeqAU credits that stay controlled as the
team grows. Our controlled study on sequential bandits
demonstrates that COSAC attains the lowest advantage MSE and consistently low
learning regret across team sizes up to $K = 16$. On the AI2
Reasoning Challenge (ARC) task, where four Qwen3-0.6B agents
reason in turn about a grade-school science question, COSAC
attains faster convergence than the other critic-free baselines.
\end{abstract}

\section{Introduction}
\label{sec:intro}

Cooperative multi-agent systems arise in settings ranging from
classical multi-agent reinforcement learning~\citep{10.5555/1534862, zhang2021marl}
to modern pipelines composed of large language models and
tool-using agents~\citep{li2024survey,han2024llm}. We focus on a
regime common to both: agents act in a fixed order, share a single
team-level reward signal, and are updated one at a time within a
training batch. Two difficulties compound here. First, the single
team reward must be attributed to multiple agents, an
under-determined credit-assignment problem. Second, updating an
upstream agent shifts the distribution over downstream agents'
behavior, so rollouts collected under the pre-update policy are
off-policy for subsequent updates within the same batch.

Classical credit-assignment tools each leave something on the
table in this regime. Centralized critics~\citep{foerster2018coma}
and factored value
functions~\citep{sunehag2018vdn,rashid2018qmix,wang2021dop,peng2021facmac}
both target simultaneous execution: the former must represent a
function over a joint-action space whose size grows exponentially
with the number of agents; the latter imposes structural
assumptions on the joint $Q$-function. Cumulative
importance-sampling corrections~\citep{kuba2022happo} address the
sequential update directly, at the price of variance that grows
exponentially in the number of agents already updated.

\noindent\textbf{Our contributions.}
\begin{enumerate}[leftmargin=*, itemsep=2pt, topsep=2pt, parsep=0pt]
\item \textbf{Sequential Aristocrat Utility (SeqAU).}
We extend the opacity-learnability framework
of~\citet{wolpert2001optimal} from simultaneous to sequential
cooperative teams (Theorem~\ref{thm:seqau},
Section~\ref{sec:seqau}), identifying the unique
prefix-conditional baseline that maximizes per-agent
learnability; subtracting this baseline from the team reward
yields the \emph{Sequential Aristocrat Utility} (SeqAU). The
characterization holds uniquely within the class of
prefix-conditional baselines, and requires no structural
assumption on the team reward; an upstream-cancellation identity
it implies is derived in
Appendix~\ref{app:upstream_cancellation}.

\item \textbf{COSAC algorithm.}
We introduce \emph{COSAC}\footnote{Pronounced as `Cossack' like in Tolstoy's ``\href{https://en.wikipedia.org/wiki/The_Cossacks_(novel)}{The Cossacks}''} (\textbf{CO}unterfactual \textbf{S}equential Credit \textbf{A}ssignment in \textbf{C}ooperative Teams), a critic-free policy-gradient algorithm for
cooperative teams in which agents act in a fixed order and are
updated one at a time within a batch
(Section~\ref{sec:capo}, Algorithm~\ref{alg:capo}). COSAC turns
the SeqAU baseline into a per-agent learning signal computable
from a single batch of team rollouts, combining three
ingredients: an additive decomposition of the team reward into
per-agent contributions~\citep{guestrin2002coordinated}, fit in
closed form by ridge regression on the batch; an
upstream-cancellation identity that splits each agent's advantage
into its \emph{direct effect} on the reward and its \emph{indirect
effect} through downstream agents' behavior; and
\emph{fictitious sampling}, which estimates the indirect effect
by drawing counterfactual continuations of the trajectory from
the current joint policy. The fictitious draws are policy
forward passes and require no environment interactions or reward
evaluations beyond the initial batch.

\item \textbf{Closed-form bias and variance analysis.}
We characterize the statistical properties of COSAC's per-agent
learning signal (the estimate of each agent's advantage that
drives its policy update) in closed form
(Theorems~\ref{thm:bias} and~\ref{thm:var},
Section~\ref{sec:theory}; proofs in
Appendix~\ref{app:theory_proofs}). The signal is unbiased when
the team reward is a sum of per-agent contributions, and its
bias grows linearly in how far the reward deviates from this
additive form. Its variance is independent of team size when
each agent acts independently of agents earlier in the order,
and grows at most linearly in the number of agents acting after
the focal agent in the worst case. By contrast, the
shared-baseline advantage used by MA-GRPO~\citep{liu2025magrpo}
carries variance that grows linearly with team size, and HA-GRPO,
inspired by HAPPO~\citep{kuba2022happo}, carries
importance-sampling-correction variance that grows exponentially
with the number of agents already updated within a batch. To our
knowledge, this is the first closed-form bias-and-variance
characterization of per-agent advantages in cooperative
multi-agent learning.

\item \textbf{Empirical validation.}
We evaluate COSAC on two benchmarks. On a controlled
sequential-bandit testbed, closed-form ground-truth advantages
let us probe the theory directly: per-agent advantage MSE scales
as Theorems~\ref{thm:bias} and~\ref{thm:var} predict, COSAC's
end-to-end regret improvement over MA-GRPO, HA-GRPO, and
C3~\citep{chen2026c3} grows with team size, and an ablation
confirms the indirect-effect correction is necessary precisely
when sequential-update non-stationarity is present. On a
multi-agent LLM reasoning task built on
AI2 Reasoning Challenge~\citep{allenai:arc}, where four
Qwen3-0.6B agents in sequence reason about a grade-school
science question, COSAC attains the highest area under the learning curve at every
$\alpha$, demonstrating the ability to converge faster across diverse team reward structures.
\end{enumerate}

Our work offers new insights into counterfactual credit
assignment in cooperative sequential multi-agent systems and
provides a lightweight, theoretically grounded alternative to
expensive learned critics, well-suited to an increasingly complex
multi-agent landscape.

\vspace{-1em}
\section{Background}
\label{sec:background}
\vspace{-.5em}

\noindent\textbf{Regime.}
We study a cooperative team of $K$ agents that act in a fixed
order, share a single team-level reward, and are updated one at
a time within a training batch. The contextual-bandit
abstraction~\citep{lattimore2020bandit} we adopt isolates
the credit-assignment problem cleanly: sequential execution and a
single team reward are present, while inter-step state dynamics
are not. Multi-LLM pipelines that produce a single completion per
query and slate-style recommender systems are well-modeled
by this abstraction; extension to full sequential MDPs is left to
follow-up work.

\noindent\textbf{Notation.}
A context $x \sim \mathcal{D}$ is drawn, and agents act in
execution order $1, 2, \ldots, K$. Each agent $k$ has action set
$\mathcal{A}_k$ of size $A$, observes the prefix of upstream
actions $a_{<k} = (a_1, \ldots, a_{k-1})$, and samples its own
action $a_k$ from $\pi_k(\cdot \mid a_{<k}, x)$. We
write $a = (a_1, \ldots, a_K)$ for the joint action and
$a_{>k} = (a_{k+1}, \ldots, a_K)$ for the suffix. The joint
policy factorizes autoregressively,
$\pi(a \mid x) = \prod_k \pi_k(a_k \mid a_{<k}, x)$, and the
downstream policy given a prefix-plus-focal action $a_{\leq k}$
is
$\pi_{>k}(\cdot \mid a_{\leq k}) = \prod_{j > k} \pi_j(\cdot \mid a_{<j})$.
The joint action produces a scalar team reward
$R(a, x) \in [-R_{\max}, R_{\max}]$ with conditional mean
$f(a, x) = \E[R \mid a, x]$, and the team objective is
$\E_{a \sim \pi}[R]$. For brevity we suppress
$x$ throughout; every expression should be read pointwise in $x$,
and theoretical results hold pointwise and therefore in
expectation over $\mathcal{D}$. Appendix~\ref{app:notation} consolidates all the notation used in the paper in a table.

\noindent\textbf{Execution order versus update order.}
Two orderings recur throughout the paper, and we take both to be
the natural sequence $1, 2, \ldots, K$ unless otherwise stated.
The \emph{execution order} is fixed by the environment or task
and determines which agents' actions are visible to which: when
agent $k$ is singled out as the \emph{focal agent}, the
\emph{upstream agents} $1, \ldots, k-1$ have already acted (their
joint action is the prefix $a_{<k}$), and the \emph{downstream
agents} $k+1, \ldots, K$ will act after $k$ (their joint action
is the suffix $a_{>k}$). The \emph{update order} is the sequence
in which agents' policies are updated within a single outer
training iteration. We write $\mu$ for the rollout-collection
joint policy at the start of an iteration. When updates are
performed sequentially, by the time agent $k$ is up for its
update, the agents earlier in the update order have already
moved from $\mu$ to their new policies, so the data collected
under $\mu$ is now off-policy for agent $k$.

\noindent\textbf{Roadmap.}
In what follows, we examine: (i) the right per-agent learning
signal in this regime, in principle (Section~\ref{sec:seqau});
(ii) how within-batch non-stationarity affects the focal agent's
learning signal (Section~\ref{sec:capo}); (iii) where existing
credit-assignment methods break down (Section~\ref{sec:capo});
and (iv) how our method addresses these failures
(Sections~\ref{sec:capo}--\ref{sec:theory}).

\vspace{-1em}
\section{Sequential Aristocrat Utility}
\label{sec:seqau}
\vspace{-1em}

This section identifies the right per-agent learning signal for
a sequential cooperative team. We build on the seminal
\emph{opacity-learnability} framework of~\citet{wolpert2001optimal}
for simultaneously-acting teams and extend it to sequential
execution; the result is a counterfactual credit for sequential teams.

Consider a cooperative team in which agent $k$'s focal action is
$a_k$ and the joint action of the others is $a_{-k}$. The
\emph{opacity} of a per-agent utility $U_k$ at $a_k$ is the
variance of $U_k(a_{-k}, a_k)$ over $a_{-k}$, conditioned on
$a_k$; low opacity means the focal-action signal is not confounded
by teammate noise. The \emph{learnability} of $U_k$ is a
signal-to-noise ratio that quantifies how informative $U_k$ is
about changes in $a_k$, measured against teammate noise; high
learnability is the desired property. A utility is \emph{factored}
with respect to the team reward $R$ if its preferences over $a_k$
agree with those of $R$ pointwise in $a_{-k}$. Within the class of
factored utilities, Wolpert and Tumer's \emph{Aristocrat Utility}
(AU) is the unique learnability-maximizing choice (precise
statement below). Formal definitions for general cooperative teams
are in Appendix~\ref{app:seqau}; in the sequential setting,
$a_{-k} = (a_{<k}, a_{>k})$.

In the simultaneous setting (every agent draws its action without
seeing the others'), \citet{wolpert2001optimal} consider per-agent
utilities formed by offsetting the team reward by a function of
the others' actions,
$U_k(a_{-k}, a_k) = R(a_{-k}, a_k) - b_k(a_{-k})$. We call $b_k$
the \emph{baseline}; any such utility is automatically factored
with respect to $R$. Within this class, they show the unique
learnability-maximizing choice is the Aristocrat Utility,
\(
    U_k^{\mathrm{AU}}(a_{-k}, a_k)
    \;=\;
    R(a_{-k}, a_k) - \E_{a_k' \sim \pi_k}[R(a_{-k}, a_k')],
\)
where $\pi_k$ is agent $k$'s (unconditional) policy. Equivalently,
AU subtracts from the team reward the expected reward over agent
$k$'s own marginal policy, with teammates' actions held fixed.

We now extend the framework to sequential teams. The simultaneous
AU subtracts a baseline $b_k(a_{-k})$. In a sequential team,
$a_{-k} = (a_{<k}, a_{>k})$ splits into the prefix (realized when
agent $k$ acts) and the suffix (downstream actions, drawn from the
joint policy). The per-agent learning signal evaluates the focal
action at a fixed prefix and averages over the suffix; the suffix
therefore cannot appear as a free variable in the baseline, and
any dependence on it would collapse to its conditional expectation
in the signal. The baseline is consequently a function of the
prefix alone, $b_k(a_{<k})$, and we consider \emph{sequential
difference utilities}
$U_k(a_{<k}, a_k, a_{>k})
    \;=\;
    R(a_{<k}, a_k, a_{>k}) - b_k(a_{<k}),$
each of which is factored with respect to $R$. We measure how
informative a sequential difference utility is via the following.

\begin{definition}[Sequential learnability]\label{def:seqlearn}
For a per-agent utility $U_k$, a reference kernel
$\rho(\cdot \mid a_{<k}) \in \Delta(\mathcal{A}_k)$, and a pair
of focal moves $a_k^1, a_k^2 \in \mathcal{A}_k$, the
\emph{sequential learnability} of $U_k$ is
\begin{equation}\label{eq:seqlearn}
    \Lambda_k^{\mathrm{seq}}(U_k; \rho, a_k^1, a_k^2)
    \;=\;
    \frac{
        \E_{a_{<k}}\!\bigl[\bar U_k(a_k^1; a_{<k}) - \bar U_k(a_k^2; a_{<k})\bigr]
    }{
        \sqrt{\E_{a_{<k},\, a_k \sim \rho(\cdot \mid a_{<k})}\!\bigl[\Var_{(a_{<k}, a_{>k}) \mid a_k}[U_k]\bigr]}
    },
\end{equation}
where
$\bar U_k(a_k; a_{<k}) := \E_{a_{>k} \sim \pi_{>k}(\cdot \mid a_{\leq k})}[U_k(a_{<k}, a_k, a_{>k})]$
is the suffix-averaged utility (the expectation of $U_k$ over
downstream actions drawn from the joint policy), and the prefix
distribution is the marginal under the joint policy.
\end{definition}

The numerator measures how much the suffix-averaged utility
$\bar U_k$ shifts when the focal action changes from $a_k^1$ to
$a_k^2$, averaged over prefixes; the denominator is (the square
root of) the average noise in $U_k$ at a focal action drawn from
$\rho$. The natural choice (used throughout) is
$\rho(\cdot \mid a_{<k}) = \pi_k(\cdot \mid a_{<k})$, mirroring the
simultaneous AU baseline, which evaluates the focal action
against $k$'s own marginal policy.

\begin{theorem}[Sequential Aristocrat Utility]\label{thm:seqau}
Within the class of sequential difference utilities
$\{R - b_k(a_{<k}) : b_k : \prod_{j<k}\mathcal{A}_j \to \R\}$, the
sequential learnability $\Lambda_k^{\mathrm{seq}}$ is uniquely
maximized for every prefix $a_{<k}$ and every pair
$(a_k^1, a_k^2)$ by the baseline
\begin{equation}\label{eq:D-star}
    b_k^*(a_{<k})
    \;=\;
    \E_{a_k' \sim \rho(\cdot \mid a_{<k}),\; a_{>k}' \sim \pi_{>k}(\cdot \mid a_{<k}, a_k')}
    \!\bigl[R(a_{<k}, a_k', a_{>k}')\bigr].
\end{equation}
The resulting \emph{Sequential Aristocrat Utility} is
$g_k^{\mathrm{SeqAU}}(a_{<k}, a_k, a_{>k}) = R(a_{<k}, a_k, a_{>k}) - b_k^*(a_{<k})$.
Uniqueness and the full proof are in Appendix~\ref{app:seqau}.
\end{theorem}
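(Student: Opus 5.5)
The plan is to treat numerator and denominator of \eqref{eq:seqlearn} separately. The numerator turns out not to depend on the baseline, so maximizing $\Lambda_k^{\mathrm{seq}}$ reduces to minimizing the denominator. I would then show that the denominator is a strictly convex quadratic in the scalar $b_k(a_{<k})$ at each prefix, minimized exactly at $b_k^*$ in \eqref{eq:D-star}.

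\textbf{Step 1 (numerator is invariant).} For $U_k = R - b_k(a_{<k})$, the baseline does not depend on $a_{>k}$, so it passes through the suffix expectation:
\[
\bar U_k(a_k; a_{<k}) = \E_{a_{>k}\sim\pi_{>k}(\cdot\mid a_{\le k})}[R] - b_k(a_{<k}).
\]
In $\bar U_k(a_k^1;a_{<k}) - \bar U_k(a_k^2;a_{<k})$ the term $b_k(a_{<k})$ cancels pointwise in the prefix, and hence also after $\E_{a_{<k}}$. The numerator is therefore the same for every $b_k$; this is the sequential analogue of factoredness. If it is negative, one swaps the roles of $a_k^1$ and $a_k^2$, so that increasing $\Lambda_k^{\mathrm{seq}}$ is equivalent to shrinking the denominator, simultaneously for all pairs $(a_k^1,a_k^2)$.

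\textbf{Step 2 (denominator as a per-prefix quadratic).} I would expand the noise term prefix by prefix. At a fixed prefix, draw $a_k\sim\rho(\cdot\mid a_{<k})$ and then $a_{>k}\sim\pi_{>k}$. The plan is to write the noise as the conditional mean-square deviation of $U_k$ from zero, namely
\[
\E_{a_k\sim\rho,\;a_{>k}}\!\bigl[(R - b_k(a_{<k}))^2 \mid a_{<k}\bigr],
\]
averaged over $a_{<k}$, in the way Wolpert--Tumer's noise term is built from fluctuations of the utility itself. Because $b_k(a_{<k})$ is a free scalar at each prefix, the outer average decouples into independent one-dimensional problems. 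Each is a strictly convex quadratic in $c=b_k(a_{<k})$, namely
\[
c^2 - 2c\,\E[R\mid a_{<k}] + \mathrm{const}.
\]
Its unique minimizer is
\[
c=\E_{a_k'\sim\rho,\;a_{>k}'\sim\pi_{>k}}[R\mid a_{<k}] = b_k^*(a_{<k}).
\]
Strict convexity gives uniqueness on every prefix with positive probability, which is the "for every prefix" claim. Substituting back yields $g_k^{\mathrm{SeqAU}} = R - b_k^*$.

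\textbf{Main obstacle.} Step 2 is where I expect the difficulty: making the exact form of the denominator in \eqref{eq:seqlearn} match this quadratic. Read literally, a variance "over $(a_{<k},a_{>k})$ given $a_k$" creates two problems.
\begin{itemize}
\item It is invariant to adding a global constant to $b_k$, so uniqueness could hold only modulo constants.
\item The first-order condition picks up a centering term, giving $b_k = b_k^* - \E_\rho[m(a_k)\mid a_{<k}]$ with $m(a_k)=\E[U_k\mid a_k]$.
\end{itemize}
My first attempt would be to follow the appendix's formalization and evaluate the noise pointwise in the prefix, which removes the constant-shift ambiguity and gives the quadratic above. If the variance form must be kept, I would instead show that the centering term $\E_\rho[m(a_k)\mid a_{<k}]$ is prefix-independent when $\rho=\pi_k$, using the tower property along the autoregressive factorization. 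I would then state uniqueness up to the additive constant, fixed by the normalization $\E[g_k^{\mathrm{SeqAU}}\mid a_{<k}]=0$, which holds at $b_k^*$ by construction.
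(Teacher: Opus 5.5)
Your Step~1 and the option-(1) version of Step~2 are essentially the paper's proof. The paper first splits off the suffix term $\E_{a_{<k}\mid a_k}[\Var_{a_{>k}\mid a_{\le k}}[R]]$ by the law of total variance. Then, in the sentence ``rewriting the variance as a centered second moment'', it replaces $\Var_{a_{<k}\mid a_k}[\bar R_k - b_k]$ by the uncentered loss $\E_{a_{<k},\,a_k\sim\rho}[(\bar R_k(a_k;a_{<k}) - b_k(a_{<k}))^2]$. Your per-prefix quadratic $\E[(R-c)^2\mid a_{<k}]$ differs from that loss only by the $b_k$-free term $\E_{a_k\sim\rho}[\Var_{a_{>k}\mid a_{\le k}}[R]]$. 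Both therefore reach the same conditional-mean minimizer, unique on the prefix support. Your diagnosis of the obstacle is also right, and it identifies the unjustified step in the paper's argument: discarding the centering at $\E[U_k\mid a_k]$ changes the objective rather than rewriting it.

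The genuine error is fallback (2). The centering term $\E_\rho[m(a_k)\mid a_{<k}]$ is not prefix-independent for $\rho=\pi_k$. The reason is that $m(a_k)=\E[U_k\mid a_k]$ conditions on $a_k$ alone, which does not refine the prefix, so no tower identity along the autoregressive factorization collapses it. As a result, the literal-variance optimum is not $b_k^*$ up to a constant.

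Concretely, take $K=3$, $k=2$, binary actions, $a_1$ uniform, $\pi_2(a_2=1\mid a_1)=p_{a_1}$ with $p_0\neq p_1$ in $(0,1)$, $a_3$ an independent fair coin, and $R=a_2+a_3$. For $U_2=R-b(a_1)$ the numerator for the pair $(1,0)$ equals $1$ for every $b$, and the literal noise is
\[
\E_{a_2}\bigl[\Var_{(a_1,a_3)\mid a_2}[U_2]\bigr] \;=\; \tfrac14 + \E_{a_2}\bigl[\Var(b(a_1)\mid a_2)\bigr].
\]
This is minimized exactly by constant $b$. By contrast, $b^*(a_1)=p_{a_1}+\tfrac12$ is non-constant and strictly worse. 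At the optimum $b\equiv c$ one gets $\E_{\pi_2}[m(a_2)\mid a_1]=p_{a_1}+\tfrac12-c$, which is visibly prefix-dependent.

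So under the literal variance in Definition~\ref{def:seqlearn}, the theorem's conclusion fails outright, not merely up to an additive constant. The statement is true for the uncentered, prefix-pointwise noise functional of your option (1), which is what both you and the paper actually minimize. You should drop (2) and state that interpretation explicitly. Your sign remark is a reasonable patch that the paper omits, but it amounts to maximizing $|\Lambda_k^{\mathrm{seq}}|$. Also, a pair with zero numerator makes every $b_k$ optimal, so uniqueness additionally requires a nonzero numerator.
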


\noindent\textbf{Counterfactual advantage.}
Specializing to the natural choice
$\rho(\cdot \mid a_{<k}) = \pi_k(\cdot \mid a_{<k})$, the SeqAU
baseline becomes $\E_\pi[R \mid a_{<k}]$, and the SeqAU-induced
advantage is
\begin{equation}\label{eq:Acfact}
    A_k^{\mathrm{SeqAU}}(a_{\leq k})
    \;=\;
    \E_\pi[R \mid a_{\leq k}] \;-\; \E_\pi[R \mid a_{<k}],
\end{equation}
the sequential counterpart of the counterfactual advantage that
COMA~\citep{foerster2018coma} estimates with a learned centralized
critic in the simultaneous-action setting.
Theorem~\ref{thm:seqau} certifies that this prefix-conditional
baseline is the \emph{unique} learnability-maximizing choice in
the sequential setting, with no structural assumption on the team
reward.
\vspace{-.5em}
\section{COSAC: Multi-Agent Policy Search using Sequential Aristocrat Utility}
\label{sec:capo}
\vspace{-.5em}

\subsection{The estimation problem and prior approaches}
\label{sec:capo_problem}

First, we revisit the setup. A team-policy jointly samples actions
for different contexts and receives a single team reward on each
joint action. We collect a batch of such interactions and use it
to update every agent, with no further interaction permitted until
the update is complete. Write $\mu$
for the rollout policy that produced the batch and $\pi^{(k-1)}$
for the joint policy at the moment we update agent $k$ (so
$\pi^{(0)} = \mu$). The central question is how to update the
agents without invalidating the data on which the next agent's
update relies. Several answers exist in the literature; we walk
through them and name the cost each one pays.

One option is to give every agent the same signal and the same
data. MA-GRPO~\citep{liu2025magrpo} treats the trajectory-level
reward as each agent's advantage, with no per-agent credit specialization. The
signal is correct in expectation but as $K$ grows, the focal agent's effect on the reward
is drowned out by everyone else's randomness, suffering from high opacity.

Another option is to correct for the staleness of the data induced by upstream agent updates via
importance sampling. HA-GRPO, a critic-free algorithm inspired by HA-PPO~\citep{kuba2022happo}, reweights each
rollout by the cumulative prefix ratio
$\prod_{j<k} \pi_j^{(j)} / \mu_j$, restoring unbiasedness. The variance, however, compounds multiplicatively across
upstream agents.

A third option is to learn a critic that supplies per-agent
credit. COMA~\citep{foerster2018coma} trains a centralized
$Q$-critic on the joint action and marginalizes the focal action;
the input scales as $|\mathcal{A}|^K$, so the critic itself
becomes the bottleneck for moderate teams.
DOP~\citep{wang2021dop} factors the critic additively across
agents, fixing the input blowup by assuming a structure. Both methods require learning and maintaining critic(s) that might not be feasible in large-scale systems like multi-agent language systems. In addition, both target simultaneous execution:
their baseline marginalizes over teammates unconditionally, so
plugging either into a natural-order pipeline yields a different
signal than the SeqAU baseline that conditions on the realized
prefix.

A fourth direction, pursued by two contemporaneous works,
intervenes directly on upstream actions to estimate
counterfactual credit. C3~\citep{chen2026c3} replays the
downstream pipeline in the \emph{actual} environment at one
reward evaluation per replay (structure-free but expensive when
reward evaluation is costly); CCPO~\citep{li2026ccpo} avoids replays with
topology-specific shortcuts (for example, ``removing'' the
reasoner in a reasoner--solver pipeline by resampling the
solver from the empty prompt) that do not extend to general
sequential chains. None of these approaches delivers the SeqAU advantage at low
variance without paying for additional reward calls or relying on topology-specific assumptions.

A structural simplification of the natural-order case rescues us.
Under sequential updates in natural execution order, $\pi^{(k-1)}$
differs from $\mu$ only at the upstream agents that have already
moved; the focal agent and all downstream agents are still at
$\mu$, so $\pi^{(k-1)}_{\geq k} = \mu_{\geq k}$. The SeqAU
advantage at agent $k$ involves sampling only agents $\geq k$
given a realized prefix, so the per-rollout estimator needs no
correction for the upstream drift.

COSAC builds a tractable estimator in three steps: an additive
model of the expected team reward, an upstream-cancellation
identity that turns the SeqAU advantage into a
direct-plus-indirect decomposition, and a fictitious-sampling
estimator for the indirect effect that uses only policy-side draws
and no importance weights.

\vspace{-.5em}
\subsection{Additive reward model for closed-form per-agent attribution}
\label{sec:capo_additive}

The expected team reward in our setting is a function of the
joint action alone, $f(a) = \E[R \mid a]$. We model this function
as additively decomposed across agents: $f(a) \;=\; \sum_{k=1}^K \varphi_k(a_k)$,
a long-studied structure in cooperative multi-agent
RL~\citep{guestrin2002coordinated, kok2006collaborative,
sunehag2018vdn, wang2021dop}. The assumption trades generality
for tractability: per-agent attribution reduces to a single
linear regression on the batch. Each joint action is encoded as
$\psi(a) \in \{0, 1\}^d$ with $d = KA$, the concatenation of
per-agent one-hot vectors (e.g., with $K = 3$ agents and
$\mathcal{A}_k = \{\alpha, \beta\}$, $a = (\alpha, \beta,
\alpha)$ becomes $\psi(a) = (1, 0 \mid 0, 1 \mid 1, 0) \in \{0,
1\}^6$, with bars separating per-agent blocks). Stacking the
$N$ rollouts of a batch into a feature matrix $\Psi \in \R^{N
\times d}$ with reward vector $r \in \R^N$, we solve
$\hat\varphi = (\lambda I + \Psi^\top \Psi)^{-1} \Psi^\top r$
in closed form, a single $O(d^3)$ ridge solve. The regularizer
supplies coverage at per-agent actions that are rare under
$\mu$ (at $\lambda = 0$ this is the slate
pseudoinverse~\citep{swaminathan2017slate}). The estimator returns per-agent components
$\hat\varphi = (\hat\varphi_1, \ldots, \hat\varphi_K) \in \R^d$
that identify the best additive approximation to $f$ under $\mu$,
with no extra rollouts beyond the batch.

\subsection{Upstream cancellation: focal advantage reduces to direct + indirect}
\label{sec:capo_cancel}

Plugging the additive approximation $\hat f(a) = \sum_k \hat\varphi_k(a_k)$
into the SeqAU advantage~\eqref{eq:Acfact} produces a sum over all
$K$ agents in both conditional expectations
$\E_{\pi^{(k-1)}}[\hat f \mid a_{\leq k}]$ and
$\E_{\pi^{(k-1)}}[\hat f \mid a_{<k}]$. Under the autoregressive
factorization of $\pi^{(k-1)}$, contributions from agents
\emph{upstream} of $k$ depend only on the prefix $a_{<k}$ and appear
identically in both terms; they cancel exactly in the difference and
need not be estimated. What remains is a
clean two-piece advantage:
\begin{equation}\label{eq:advantage_decomp}
    \hat A_k^{\mathrm{LS}}(a_{\leq k})
    \;=\;
    \underbrace{\hat\varphi_k(a_k) - \E_{a_k' \sim \pi_k(\cdot \mid a_{<k})}[\hat\varphi_k(a_k')]}_{D_k : \text{direct effect (closed form)}}
    \;+\;
    \underbrace{\sum_{j > k}\!\bigl(\E_{\pi_{>k} \mid a_{\leq k}}[\hat\varphi_j] - \E_{\pi_{>k} \mid a_{<k}}[\hat\varphi_j]\bigr)}_{I_k : \text{indirect effect}}.
\end{equation}
The \emph{direct effect} $D_k$ is the counterfactual individual
signal to agent $k$ given the prefix: agent $k$'s realized
contribution $\hat\varphi_k(a_k)$ contrasted with what its
contribution would have been on average had it drawn its action
from $\pi_k(\cdot \mid a_{<k})$. The \emph{indirect effect} $I_k$
quantifies how agent $k$'s action affects the downstream agents
and, through them, the team outcome: by shifting the conditional
distribution of downstream actions, agent $k$ changes their
expected contributions $\hat\varphi_j$ for $j > k$, and $I_k$
sums those shifts. The clean split into a focal-slot term and a
downstream-influence term is an artifact of additivity: under
$f = \sum_k \varphi_k(a_k)$, agent $k$'s effect on the team reward
separates exactly into its own slot and the slots it influences.

The indirect effect is unique to the sequential setting.
Downstream agents observe agent $k$'s action through the prefix
and condition their own choices on it, so agent $k$'s credit must
include the value of those induced downstream changes; a
simultaneous-execution baseline does not capture them. In the
\emph{factored-policy limit}, where the joint policy is a product
of individual policies independent of the prefix,
$\pi(a) = \prod_k \pi_k(a_k)$, downstream agents do not condition
on the prefix and $I_k$ vanishes; $D_k$ alone then recovers a
per-agent advantage similar to that of DOP~\citep{wang2021dop} in the
simultaneous-execution setting. The full algebraic derivation
of~\eqref{eq:advantage_decomp} is given in
Appendix~\ref{app:upstream_cancellation}.

\subsection{Computing SeqAU for additive joint rewards}
\label{sec:capo_fict}

Computing the indirect effect requires conditional expectations
of $\hat\varphi_{>k}$ under the downstream policy at each
rollout's prefix, which the single realized completion in a
$\mu$-rollout cannot provide. Under natural ordering, the relevant
policies are $\pi_k^{(k-1)} = \mu_k$ and
$\pi_{>k}^{(k-1)} = \mu_{>k}$, so we draw fictitious continuations
directly from these components, conditioned on each rollout's
realized prefix. For each rollout $n$ and $\ell = 1, \ldots, L$, we draw
$\tilde a_{>k}^{(n,\ell)} \sim \pi_{>k}^{(k-1)}(\cdot \mid a_{\leq k}^{(n)})$
and
$(\tilde a_k^{(n,\ell)}, \tilde a_{>k}^{\prime(n,\ell)}) \sim \pi_k^{(k-1)}(\cdot \mid a_{<k}^{(n)}) \cdot \pi_{>k}^{(k-1)}(\cdot \mid a_{<k}^{(n)}, \tilde a_k^{(n,\ell)})$,
and form the per-rollout Monte Carlo estimate of the SeqAU
advantage:
{\setlength{\abovedisplayskip}{3pt}\setlength{\belowdisplayskip}{3pt}%
\begin{equation}\label{eq:fict_QV}
    \hat A_k^{(n)} \;=\; \frac{1}{L}\sum_{\ell = 1}^L \bigl[\hat f(a_{\leq k}^{(n)}, \tilde a_{>k}^{(n,\ell)}) - \hat f(a_{<k}^{(n)}, \tilde a_k^{(n,\ell)}, \tilde a_{>k}^{\prime(n,\ell)})\bigr].
\end{equation}%
}
The first term holds the prefix-plus-focal-action at its realized
value and averages over downstream completions; the second
integrates the focal action out under $\pi_k^{(k-1)}$ and then
draws fresh downstream completions. Their difference is exactly
$D_k + I_k$ from~\eqref{eq:advantage_decomp}, with the direct
effect computed in closed form and the indirect effect estimated
by the fictitious draws.

The draws are policy forward passes only, with no reward
evaluations, no environment interactions, and no importance
weights. It is noteworthy that when the policy updates proceed sequentially in natural
order, the fictitious continuations for every agent are drawn
from $\pi_{\geq k}^{(k-1)} = \mu_{\geq k}$ and do not depend on
the within-batch updates, so the sampling step itself can be
parallelized across agents, an empirical efficiency we leave for follow-up work.

\vspace{-.5em}
\subsection{Algorithm and per-iteration cost}
\label{sec:capo_alg}
\vspace{-.5em}

A single outer iteration has three steps: collect $N$ joint
rollouts under the current policy $\mu$, fit $\hat\varphi$ by one
ridge solve, and update agents in natural execution order. For
each agent $k$, fictitious continuations from the current
downstream policies turn $\hat\varphi$ into per-rollout advantages
$\hat A_k^{(n)}$, against which the agent's parameters take a
small number of inner policy-gradient steps before control passes
to agent $k+1$. Full pseudocode is in
Appendix~\ref{app:algorithm}.

The fictitious draws are policy forward passes only, so COSAC matches MA-GRPO
and HA-GRPO on real rollouts and reward calls ($N$ trajectories
with one reward evaluation each). C3, by contrast, pays $NKL$
extra reward evaluations per iteration to replay downstream
pipelines in the actual environment, which dominates in regimes
where reward evaluation is costly, such as pipelines using
LLM-based reward models or real-world deployments of robot and
drone teams. The full per-iteration cost decomposition is given
in Appendix~\ref{app:algorithm}.

\vspace{-1em}
\section{Theoretical Analysis}
\label{sec:theory}
\vspace{-1em}

True team rewards are rarely strictly additive across agents; the
additive structure is the first-order linear approximation of the
team reward, and its dominance in practice is what makes
value-decomposition methods effective in cooperative multi-agent
RL.

We now relax the strict-additivity assumption and analytically
quantify the impact of non-additivity on COSAC learning. Two
questions guide the analysis: how much bias the non-additive
residual induces in the per-agent advantage
(Theorem~\ref{thm:bias}) and how large the variance of the
resulting advantage estimate is (Theorem~\ref{thm:var}).

Let $\varepsilon(a) = \hat f(a) - f(a)$ denote the residual of the
additive approximation, so that $\|\varepsilon\|_\infty$ measures
the worst-case non-additivity. Let
$G_\mu = \E_\mu[\psi(a)\psi(a)^\top] \in \R^{d \times d}$ be the
population Gram matrix of the indicator features with smallest
eigenvalue $\kappa_\mu = \lambda_{\min}(G_\mu)$. We assume
$\kappa_\mu > 0$, that is, every per-agent action has positive
probability under the marginal of $\mu$; the ridge regularizer
$\lambda > 0$ supplies the coverage otherwise. All rewards are
bounded, $|R(a)| \leq R_{\max}$. Proofs are given in
Appendix~\ref{app:theory_proofs}.

\begin{theorem}[Advantage bias]\label{thm:bias}
Let $\Delta_k = \hat A_k^{\mathrm{LS}} - A_k^{\mathrm{SeqAU}}$ be
the COSAC advantage bias, where $\hat A_k^{\mathrm{LS}}$ is computed
from $\hat\varphi$ fit on $\mu$-rollouts. For any joint action
$(a_{<k}, a_k)$,
\[
    |\Delta_k(a_{\leq k})|
    \;\leq\;
    \underbrace{\E_{\pi_{\geq k} \mid a_{<k}}\!\bigl[\delta_k^\varepsilon(a_{-k})\bigr]}_{\text{residual sensitivity}}
    \;+\;
    \|\varepsilon\|_\infty \cdot
    \underbrace{\max_{a_k, a_k'}\!\bigl\|\pi_{>k}(\cdot \mid a_{<k}, a_k) - \pi_{>k}(\cdot \mid a_{<k}, a_k')\bigr\|_1}_{\text{downstream-policy drift}},
\]
where $\delta_k^\varepsilon(a_{-k}) = \max_{a_k, a_k'}
|\varepsilon(a_{<k}, a_k, a_{>k}) - \varepsilon(a_{<k}, a_k', a_{>k})|$.
\end{theorem}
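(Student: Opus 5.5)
Since $\hat A_k^{\mathrm{LS}}$ is the SeqAU advantage~\eqref{eq:Acfact} with $f$ replaced by $\hat f$ (upstream cancellation, Appendix~\ref{app:upstream_cancellation}, is an exact algebraic identity, so $\hat A_k^{\mathrm{LS}}$ equals $\E_\pi[\hat f\mid a_{\le k}]-\E_\pi[\hat f\mid a_{<k}]$), linearity of conditional expectation gives
\[
\Delta_k(a_{\le k}) \;=\; \E_{\pi}[\varepsilon \mid a_{\le k}] - \E_{\pi}[\varepsilon \mid a_{<k}]
\;=\; \E_{a_k'\sim\pi_k(\cdot\mid a_{<k})}\bigl[\,h(a_k)-h(a_k')\,\bigr],
\]
where $h(b):=\E_{a_{>k}\sim\pi_{>k}(\cdot\mid a_{<k},b)}[\varepsilon(a_{<k},b,a_{>k})]$. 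Here the expectations of $f$ are the true ones, and upstream terms cancel because the prefix is fixed. So the whole task is to bound $|h(a_k)-h(a_k')|$ uniformly in $a_k'$ and then average over $a_k'$.

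For fixed $a_k,a_k'$ I add and subtract the cross term in which the residual is evaluated at $a_k$ but the suffix is drawn with $a_k'$:
\[
h(a_k)-h(a_k') = \underbrace{\E_{a_{>k}\sim\pi_{>k}(\cdot\mid a_{<k},a_k')}\bigl[\varepsilon(a_{<k},a_k,a_{>k})-\varepsilon(a_{<k},a_k',a_{>k})\bigr]}_{(\mathrm{i})}
+ \underbrace{\sum_{a_{>k}}\bigl(\pi_{>k}(a_{>k}\mid a_{<k},a_k)-\pi_{>k}(a_{>k}\mid a_{<k},a_k')\bigr)\varepsilon(a_{<k},a_k,a_{>k})}_{(\mathrm{ii})}.
\]
For term (i), bound the integrand pointwise by $\delta_k^\varepsilon(a_{-k})$. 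Averaging then over $a_k'\sim\pi_k(\cdot\mid a_{<k})$ together with the suffix draw $\pi_{>k}(\cdot\mid a_{<k},a_k')$ gives exactly $\E_{\pi_{\ge k}\mid a_{<k}}[\delta_k^\varepsilon(a_{-k})]$. This is the reason for choosing the cross term with the suffix drawn under $a_k'$: the measure on $(a_k',a_{>k})$ becomes the joint $\pi_{\ge k}$ given the prefix, and since $\delta_k^\varepsilon$ does not depend on the focal action, the $a_k'$ draw marginalizes cleanly. For term (ii), Hölder gives $|(\mathrm{ii})|\le \|\varepsilon\|_\infty\,\|\pi_{>k}(\cdot\mid a_{<k},a_k)-\pi_{>k}(\cdot\mid a_{<k},a_k')\|_1$, which is at most the max-drift term. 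Its average over $a_k'$ is bounded by the same quantity.

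Combining the two pieces with the triangle inequality inside the $a_k'$ average yields the stated bound. The main obstacle is choosing the interpolation correctly, so that the first term's averaging measure matches $\E_{\pi_{\ge k}\mid a_{<k}}$ as written rather than a mixed measure conditioned on the realized $a_k$. If the intended reading instead has the suffix conditioned on $a_k$, I would use the other cross term, which gives the symmetric decomposition with the same drift term. A secondary check is that $\pi$ in both $A^{\mathrm{SeqAU}}$ and $\hat A^{\mathrm{LS}}$ refers to the same policy $\pi^{(k-1)}$, with $\pi^{(k-1)}_{\ge k}=\mu_{\ge k}$ under natural ordering. This ensures no importance-weight discrepancy enters $\Delta_k$, and the fact that $\hat\varphi$ was fit on $\mu$ only affects the size of $\varepsilon$.
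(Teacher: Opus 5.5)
Your proposal is correct and follows the paper's proof essentially step for step. Both write $\Delta_k$ as an average over $a_k'\sim\pi_k(\cdot\mid a_{<k})$ of the difference of suffix-averaged residuals, then add and subtract the same cross term: $\varepsilon$ at the realized $a_k$ with the suffix drawn from $\pi_{>k}(\cdot\mid a_{<k},a_k')$. The two resulting channels are bounded identically, by the residual sensitivity (which averages under $\pi_{\geq k}(\cdot\mid a_{<k})$) and by $\|\varepsilon\|_\infty$ times the $\ell_1$ downstream-policy drift.
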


The bound has two contributions. The residual-sensitivity term
captures how much the additive residual $\varepsilon$ varies with
the focal action when teammate randomness is averaged out. The
downstream-policy-drift term captures how strongly the focal
action shifts the conditional distribution over downstream
agents' actions, multiplied by the residual norm
$\|\varepsilon\|_\infty$. When additivity holds strictly,
$\|\varepsilon\|_\infty = 0$, both terms vanish, and the COSAC
advantage equals the SeqAU advantage exactly. When the joint
policy factors across agents, $\pi(a) = \prod_k \pi_k(a_k)$, the
drift term is identically zero and the bias is governed by the
residual sensitivity alone.

\begin{theorem}[Advantage variance]\label{thm:var}
The variance of the COSAC advantage
$\hat A_k^{\mathrm{LS}} = D_k + I_k$ satisfies
$\Var(\hat A_k^{\mathrm{LS}}) \leq 2(\Var(D_k) + \Var(I_k))$,
where, with $\Psi^\top\Psi$ replaced by its population version
$NG_\mu$,
\(
    \Var(D_k) \;\leq\; \frac{2\,R_{\max}^2}{\lambda + N\,\kappa_\mu}
\)
(independent of $K$ and the feature dimension $d$), and
$\Var(I_k) = O(K/L)$ from the $L$ fictitious draws.
\end{theorem}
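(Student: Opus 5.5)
The first inequality is the elementary bound $\Var(X+Y)\le 2\Var(X)+2\Var(Y)$. It follows from $(u+v)^2\le 2u^2+2v^2$ applied to the centered variables $u=D_k-\E D_k$ and $v=I_k-\E I_k$, or from Cauchy--Schwarz on the covariance. This gives $\Var(\hat A_k^{\mathrm{LS}})\le 2(\Var(D_k)+\Var(I_k))$ with no assumptions. The remaining work is to bound the two pieces separately.

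\textbf{Direct term.} Write $D_k = \hat\varphi_k(a_k)-\E_{a_k'\sim\pi_k}[\hat\varphi_k(a_k')] = (e_{a_k}-p_k)^\top S_k\hat\varphi$. Here $S_k$ selects the $k$-th block, $e_{a_k}$ is the one-hot vector of the focal action, and $p_k=\pi_k(\cdot\mid a_{<k})$. The randomness to control is that of the ridge estimate $\hat\varphi=(\lambda I+\Psi^\top\Psi)^{-1}\Psi^\top r$ over the batch, conditional on the design. After replacing $\Psi^\top\Psi$ by $NG_\mu$, the estimation noise comes from the reward noise in $r$. Since $|R|\le R_{\max}$, the noise covariance is $\preceq R_{\max}^2 I$. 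Then
\[
\Cov(\hat\varphi)\preceq R_{\max}^2 (\lambda I+NG_\mu)^{-1}\,NG_\mu\,(\lambda I+NG_\mu)^{-1}\preceq R_{\max}^2(\lambda I+NG_\mu)^{-1}.
\]
The second step holds because $NG_\mu\preceq \lambda I+NG_\mu$ and the matrices commute. For any vector $w$ this gives $\Var(w^\top\hat\varphi)\le R_{\max}^2\|w\|_2^2/(\lambda+N\kappa_\mu)$. With $w=S_k^\top(e_{a_k}-p_k)$ we have $\|w\|_2^2=\|e_{a_k}-p_k\|_2^2\le 2$, because the squared distance between two points of the simplex is at most $2$. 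This yields the factor $2R_{\max}^2/(\lambda+N\kappa_\mu)$. The bound does not depend on $K$ or $d$, because $w$ is supported on a single block and has norm at most $\sqrt2$.

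If $D_k$ should also be viewed as random over the on-policy draw of $a_{\le k}$, I would handle that randomness by the law of total variance. The prefix-driven part is the signal being estimated rather than estimator noise, so the theorem's variance is read as conditional on $a_{\le k}$. I would state this reading explicitly.

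\textbf{Indirect term.} Conditionally on $\hat\varphi$ and the realized prefix, the fictitious estimate of $I_k$ is an average of $L$ i.i.d.\ draws of $Z=\sum_{j>k}\bigl(\hat\varphi_j(\tilde a_j)-\hat\varphi_j(\tilde a_j')\bigr)$, so its variance is $\Var(Z)/L$. The ridge solution is bounded: $\|\hat\varphi\|_\infty\le\|\hat\varphi\|_2\le \|(\lambda I+NG_\mu)^{-1}\Psi^\top r\|$, which is $O(R_{\max})$ per coordinate under the population-Gram substitution. Hence each summand of $Z$ is bounded by a constant $c$. Expanding $\Var(Z)$ into diagonal terms and cross-covariances between downstream slots, the diagonal contributes $O(K-k)$. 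I would first try to show that the cross-covariances are controlled by the prefix dependence of the downstream policies; they vanish in the factored limit, which gives the paper's ``independent of team size'' regime. In the worst case, however, Cauchy--Schwarz alone gives $O((K-k)^2)$.

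\textbf{Main obstacle.} Closing the gap between the $O((K-k)^2)$ worst case and the claimed $O(K/L)$ is the delicate step. My plan is to use the fact that $\hat f$ is the regression fit, so $\sum_j\hat\varphi_j$ is a single function bounded by $O(R_{\max})$ on the support, up to the residual $\|\varepsilon\|_\infty$. That yields $\Var(Z)=O(R_{\max}^2)$ directly. Failing that, I would use a per-block bound on $\hat\varphi_j$ combined with a martingale decomposition of $Z$ along the downstream chain: with the increments $\E[Z\mid\tilde a_{\le j}]-\E[Z\mid\tilde a_{<j}]$, orthogonality of the increments gives $\Var(Z)$ as a sum of $K-k$ increment variances, each bounded by a constant, hence $O(K)$.
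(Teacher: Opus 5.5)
Your treatment of $\Var(X+Y)\le 2\Var X+2\Var Y$ and of $D_k$ coincides with the paper's proof. You use the same substitution $\Psi^\top\Psi\to NG_\mu$, the same chain $\Cov(\hat\varphi)\preceq R_{\max}^2(\lambda I+NG_\mu)^{-1}\preceq R_{\max}^2(\lambda+N\kappa_\mu)^{-1}I$, and the same linear functional with $\|w\|_2^2\le 2$, read conditionally on $a_{\le k}$; your simplex-diameter justification of the factor $2$ is cleaner than the paper's. The gap is in the indirect term. The paper handles it in one sentence: each $\hat\varphi_j$ is bounded by $R_{\max}$ at the population level, so $\hat f^{>k}$ is a sum of $K-k$ bounded terms and $\Var(Y^{(\ell)}\mid\hat\varphi)=O(K)R_{\max}^2$. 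Your objection to that inference is correct, because bounded summands correlated through the downstream chain only give $O((K-k)^2)$. However, neither of your repairs closes the gap as written. The martingale route fails because the increment $\E[Z\mid\tilde a_{\le j}]-\E[Z\mid\tilde a_{<j}]$ is not bounded by a constant. Changing $\tilde a_j$ moves the conditional law of every later slot, so the increment is itself a sum of up to $K-j$ per-block effects. With per-block bounds alone, take a copying chain with aligned $\hat\varphi_j$ (the large-$\rho$ limit of the paper's bandit testbed): all $\Theta((K-k)^2)$ of the variance sits in the first increment. Orthogonality only redistributes variance. A constant bound on the increments would need a mixing or contraction property of the downstream conditionals, which the theorem does not assume.

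Route (a) is the one that can work, but as stated it proves a different bound. First, $|\hat f|\le R_{\max}+\|\varepsilon\|_\infty$ is true, but it trades $K$ for $\|\varepsilon\|_\infty$, which the hypotheses do not control. Second, your $Z$ is a downstream partial sum, not $\hat f$. On the $V$-side the slot-$k$ term $\hat\varphi_k(\tilde a_k)$ does not cancel, so you also need a per-block bound. Your chain does not supply one: $\|\Psi^\top r\|_2$ can be as large as $\sqrt K\,NR_{\max}$, which leaves only $\|\hat\varphi\|_2\le\sqrt K\,NR_{\max}/(\lambda+N\kappa_\mu)$. What rescues the route is the structure of the sampler. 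Under natural-order updates $\pi^{(k-1)}_{\ge k}=\mu_{\ge k}$ and the prefixes come from $\mu$-rollouts, so both $(a_{\le k},\tilde a_{>k})$ and $(a_{<k},\tilde a_k,\tilde a'_{>k})$ are exactly $\mu$-distributed. At the population level the ridge fit is an $L^2(\mu)$ contraction, $\E_\mu[\hat f^2]=\hat\varphi^\top G_\mu\hat\varphi\le\E_\mu[f^2]\le R_{\max}^2$. Hence, averaged over the rollout's prefix, each half of the estimator in \eqref{eq:fict_QV} has conditional variance at most $R_{\max}^2$. For the slot-$k$ term, $\kappa_\mu\|\hat\varphi\|_2^2\le\hat\varphi^\top G_\mu\hat\varphi$ gives the per-block bound $\|\hat\varphi\|_\infty\le R_{\max}/\sqrt{\kappa_\mu}$. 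Together these give a Monte Carlo variance of order $R_{\max}^2(1+\kappa_\mu^{-1})/L$, with no $K$ at all. This is a prefix-averaged statement; pointwise in the prefix you fall back on $\sup|\hat f|$, hence on $\|\varepsilon\|_\infty$, so state which version you prove. (The population ridge solution lies in the range of $G_\mu$, so the smallest nonzero eigenvalue may replace $\kappa_\mu$ in the per-block bound. This matters: $G_\mu$ annihilates every $(c_1\mathbf 1_A,\dots,c_K\mathbf 1_A)$ with $\sum_j c_j=0$, so $\kappa_\mu=0$ whenever $K\ge 2$.)
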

 The
direct-effect variance $\Var(D_k)$ is the residual that does not
vanish at any $L$, but it is controlled by the batch size $N$ and
the coverage $\kappa_\mu$ alone, with no $K$-dependence. The
indirect-effect variance $\Var(I_k)$ is Monte Carlo noise from the
fictitious sampling, driven to zero by enlarging $L$. Under
factored policies, $I_k \equiv 0$ and only the direct-effect bound
remains. The bound holds for any $f$, additive or not: a
non-additive residual contributes to the bias of $\hat\varphi$
(Theorem~\ref{thm:bias}) but not the variance. By comparison, MA-GRPO's shared-baseline
advantage~\citep{liu2025magrpo} has variance
$O(K\sigma_\varphi^2/N)$, which is unavoidable since the team
reward has $K$-way variance. HA-GRPO carries
IS variance $O(\prod_{j<k}\E_\mu[Y_j^2] \cdot R_{\max}^2/N)$,
exponential in upstream agents.

\vspace{-1em}
\section{Experiments}
\label{sec:experiments}
\vspace{-1em}

We evaluate our proposed framework on two
benchmarks: a sequential cooperative bandit and a multi-agent
LLM reasoning task. We first probe various
aspects of the framework on the bandit setup, where closed-form
ground-truth advantages let us test the theory of
Section~\ref{sec:theory} directly. We then study scaling on the
sequential reasoning task, where four LLM agents reason in turn
about a grade-school science question under various team reward structures. Across both benchmarks, we
compare COSAC against critic-free multi-agent policy gradient
baselines MA-GRPO, HA-GRPO, and C3.

\subsection{Sequential cooperative bandit}
\label{sec:exp_synth}

\noindent\textbf{Setup.} $K$ agents pick actions in turn from a
size-$A=4$ menu conditioned on the upstream actions, and the
team receives a single reward
$f(a) = \sum_k \varphi_k(a_k) + \lambda_{\mathrm{int}}
\sum_{k<\ell} g_{k\ell}(a_k, a_\ell)$, with $\varphi_k$ and
$g_{k\ell}$ drawn per seed at matched variance. Because $f$ is
closed-form, we compute ground-truth advantages by exact
marginalization over the joint policy. Full hyperparameters,
the ridge-vs-fictitious-sample variant choices, and the
budget-matching protocol are in
Appendix~\ref{app:hyperparameters}.

Two knobs sweep the difficulty. The interaction strength
$\lambda_{\mathrm{int}} \in [0,1]$ moves the reward from purely
additive ($\lambda_{\mathrm{int}}=0$) to fully interacting
($\lambda_{\mathrm{int}}=1$, where pairwise and additive
variance match). The non-stationarity $\rho \geq 0$ explicitly
controls how strongly upstream actions influence downstream
agents' policies (Appendix~\ref{app:rho_parametrization}): at
$\rho = 0$ the team policy is fully factored across agents, and
larger $\rho$ progressively concentrates each downstream
conditional on its upstream context. By
Theorem~\ref{thm:seqau}, the indirect term $I_k$ vanishes at
$\rho = 0$ and grows with $\rho$.

\begin{figure}[t]
\centering
\begin{minipage}[t]{0.35\textwidth}
  \centering
  \includegraphics[width=\textwidth]{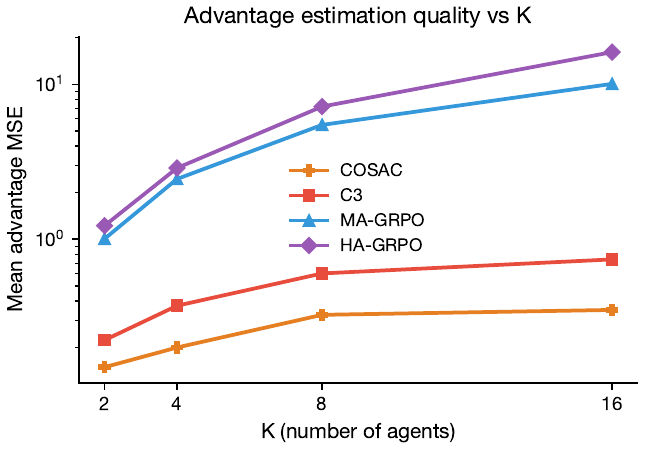}
\end{minipage}
\begin{minipage}[t]{0.35\textwidth}
  \centering
  \includegraphics[width=\textwidth]{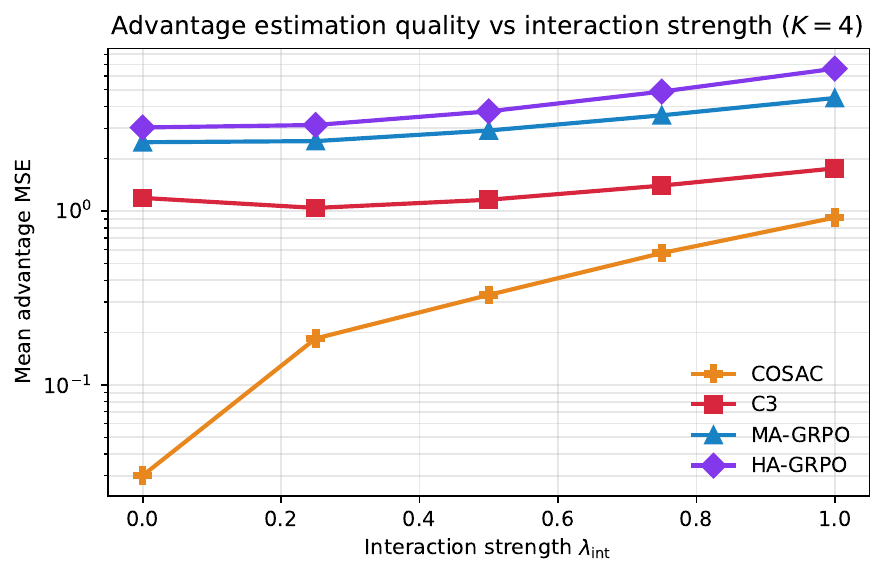}
\end{minipage}
\caption{\small Per-agent advantage MSE (log scale; 30 seeds; lower is better).
\textbf{Left:} vs. $K$ at $\lambda_{\mathrm{int}}=0, \rho=1$; at $K=16$, COSAC's
MSE is $\approx 2\times$, $30\times$, $46\times$ smaller than C3, MA-GRPO,
HA-GRPO respectively. \textbf{Right:} vs. $\lambda_{\mathrm{int}}$ at $K=4, \rho=1$.}
\label{fig:synth_mse}
\end{figure}

\noindent\textbf{Advantage estimation quality.}
COSAC produces the lowest per-agent advantage MSE across team
sizes and interaction strengths, matching the scaling predicted
by theory (Figure~\ref{fig:synth_mse}, $30$ seeds, $N=16$
rollouts/batch). Sweeping $K$ at $\lambda_{\mathrm{int}}=0$
(left), COSAC's MSE is flat in $K$ as predicted by
Theorem~\ref{thm:var}, MA-GRPO grows linearly, HA-GRPO faster
than linearly, and C3 tracks COSAC with a constant offset; at
$K=16$, COSAC's MSE is $\approx 2\times$, $30\times$, and
$46\times$ smaller than C3, MA-GRPO, and HA-GRPO. Sweeping
$\lambda_{\mathrm{int}}$ at $K=4$ (right), COSAC's MSE grows
linearly in $\|\varepsilon\|_\infty$ as predicted by
Theorem~\ref{thm:bias} and remains lowest across the range. The
per-agent breakdown is flat across positions for COSAC and grows
with agent index for HA-GRPO
(Appendix~\ref{app:per_agent_mse}).

\begin{table}[t]
\setlength{\abovecaptionskip}{2pt}
\setlength{\belowcaptionskip}{2pt}
\setlength{\tabcolsep}{4pt}
\renewcommand{\arraystretch}{0.95}
\centering
\footnotesize
\caption{\small Mean normalized regret per $(K, \mathrm{method})$,
averaged over $\lambda_{\mathrm{int}} \in \{0, 0.25, 0.5, 0.75,
1.0\}$ and $\rho \in \{0, 1, 2\}$, 300 seeds/cell. Lower is
better; bold marks the best method. {COSAC wins for $K
\geq 4$, with the gap over MA-GRPO widening as $K$ grows.}}
\label{tab:optim_auc}
\begin{tabular}{lccccc}
\toprule
Method & $K=2$ & $K=4$ & $K=6$ & $K=8$ & $K=10$ \\
\midrule
MA-GRPO & $\mathbf{0.116 \pm 0.002}$ & $\mathbf{0.116 \pm 0.001}$ & $0.119 \pm 0.001$          & $0.119 \pm 0.001$          & $0.125 \pm 0.001$ \\
HA-GRPO & $0.119 \pm 0.002$         & $0.142 \pm 0.002$          & $0.179 \pm 0.002$          & $0.218 \pm 0.002$          & $0.253 \pm 0.001$ \\
C3      & $0.244 \pm 0.002$         & $0.255 \pm 0.001$          & $0.263 \pm 0.001$          & $0.267 \pm 0.001$          & $0.273 \pm 0.001$ \\
COSAC   & $0.125 \pm 0.002$         & $\mathbf{0.115 \pm 0.001}$ & $\mathbf{0.107 \pm 0.001}$ & $\mathbf{0.103 \pm 0.001}$ & $\mathbf{0.102 \pm 0.001}$ \\
\bottomrule
\end{tabular}
\end{table}
\vspace{-.3em}

\noindent\textbf{End-to-end policy optimization.}
Under matched real-environment-interaction budget across $K \in
\{2,4,6,8,10\}$, $\lambda_{\mathrm{int}} \in \{0,0.25,0.5,0.75,
1.0\}$, and $\rho \in \{0,1,2\}$ ($300$ seeds/cell; per-cell
trajectories in Appendix~\ref{app:regret_trajectories}), COSAC
has the lowest mean regret for $K \geq 4$, with the gap over
MA-GRPO widening from $\approx 0.01$ at $K=4$ to $\approx 0.02$
at $K=10$ (Table~\ref{tab:optim_auc}). MA-GRPO ties COSAC at
$K=2$, where there is little upstream context to disambiguate;
HA-GRPO degrades rapidly in $K$. C3 trails because its
per-iteration cost grows as $O(K^2)$, so matching real-env
interactions across methods leaves it with substantially fewer
outer-iteration updates within the given budget.

\noindent\textbf{Direct-effect ablation.}
Theorem~\ref{thm:seqau} predicts the indirect term $I_k$ is dead
variance when the joint policy factorizes ($\rho = 0$) and
becomes load-bearing as $\rho$ grows. We test this by ablating
$I_k$ (COSAC-Direct) and sweeping $\rho \in \{0, 5, 10, 20\}$
across $K \in \{2, 4, 8\}$ and $\lambda_{\mathrm{int}} \in
\{0,0.25,0.5,0.75,1.0\}$. At $\rho = 0$, COSAC-Direct matches
or slightly outperforms COSAC; COSAC pulls ahead cleanly by
$\rho = 10$ and the gap widens at $\rho = 20$, across all $K$
and $\lambda_{\mathrm{int}}$
(Appendix~\ref{app:direct_ablation_trajectories}).

\vspace{-.5em}
\subsection{Multi-agent LLM reasoning on AI2 Reasoning Challenge}
\label{sec:exp_arc}
\vspace{-.5em}

\noindent\textbf{Setup.}
We test COSAC on a multi-agent reasoning task in which a team of
$K = 4$ LLM agents jointly answers grade-school science
questions drawn from the AI2 Reasoning-Challenge (ARC)
benchmark~\citep{allenai:arc}. Each ARC item is a
multiple-choice question paired with four natural-language
answer choices labelled A--D. The agents act in turn: each
downstream agent sees the question, the four choices, and
every upstream agent's reasoning and selection, then writes
its own short reasoning and selects one of the four choices.
For COSAC learning, naturally, we can categorize each agent's
$N$ utterances into 5 bins (A, B, C, D, or no-answer) by the
choice each utterance selects, and per-agent credit is estimated
over this discretization.

The team reward depends only on the agents' selected choices,
with a no-answer outcome when an agent fails to commit to one
of the four. We study three structures: an
\textbf{independence reward} $r_{\mathrm{ind}}$ that scores
each agent on its own correctness, leaving the team reward
fully separable across agents; a \textbf{correct-consensus
reward} $r_{\mathrm{cons}}$ that fires when the team's modal
choice matches the ground truth, scaled by the agreement
fraction; and a \textbf{shaped reward}
$r(\alpha) = \alpha \cdot r_{\mathrm{ind}} + (1 - \alpha)
\cdot r_{\mathrm{cons}}$ that blends the two. We sweep
$\alpha \in \{0, 0.5, 1\}$. Appendix~\ref{app:arc_reward}
gives exact formulas.

\begin{figure}[H]
\centering
\includegraphics[width=0.32\textwidth]{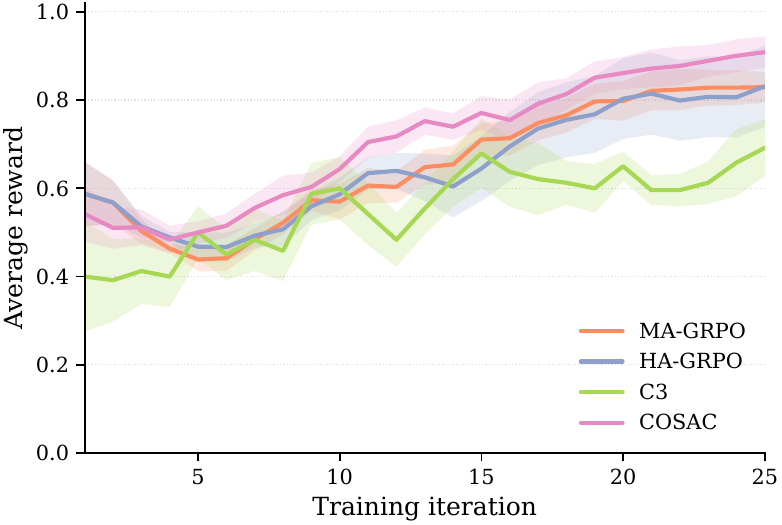}
\includegraphics[width=0.32\textwidth]{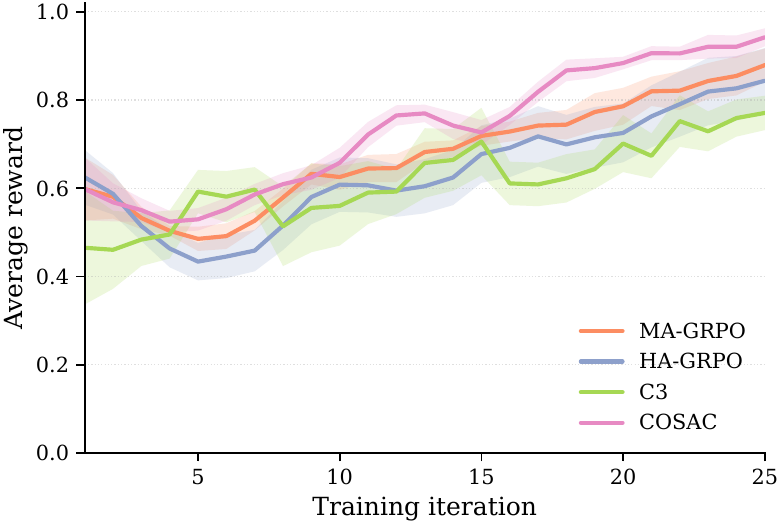}
\includegraphics[width=0.32\textwidth]{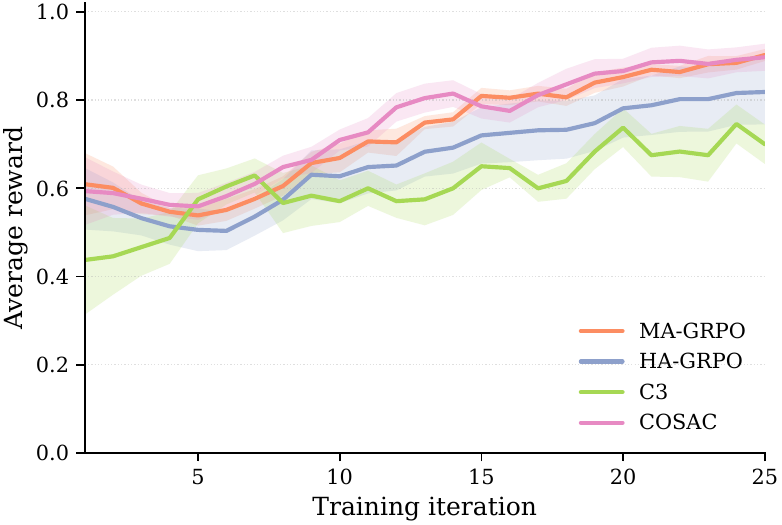}
\caption{\small Learning curves on the ARC reasoning task under
three reward structures: correct-consensus ($\alpha = 0$, left),
shaped ($\alpha = 0.5$, center), independence ($\alpha = 1$,
right). Mean $\pm$ stderr, 10 questions. {COSAC leads at
every $\alpha$; the lead is largest under correct-consensus.}}
\label{fig:arc_curves}
\end{figure}

\vspace{-1em}
We select questions where the agents do not all initially
agree on a single answer and where the initial team accuracy
is below ceiling, so that there is both a per-agent
credit-assignment problem to solve and room for learning to
improve the team's answer. We use 10 such questions, training
each method on each, with results aggregated as the mean and
standard error across the 10. The agents are Qwen3-0.6B +
LoRA fine-tunes~\citep{qwen3technicalreport,
hu2021loralowrankadaptationlarge}. We report two metrics: the
area under the mean-reward learning curve during training,
and the final team reward on the chosen questions.
Appendix~\ref{app:arc_details} contains the selected
questions, hyperparameters, prompt structure, and a worked
example.

\vspace{-.5em}
\begin{table}[H]
\setlength{\abovecaptionskip}{2pt}
\setlength{\belowcaptionskip}{2pt}
\renewcommand{\arraystretch}{0.95}
\centering
\caption{\small AUC of the learning curve (left) and final team
reward (right) on ARC, by reward-shaping coefficient $\alpha$ and
method. Best per row bold; mean $\pm$ stderr, 10 questions.
{COSAC has the highest AUC at every $\alpha$, and ties
MA-GRPO on final reward at $\alpha=1$.}}
\label{tab:arc_final}
\scriptsize
\setlength{\tabcolsep}{2pt}
\resizebox{\textwidth}{!}{%
\begin{tabular}{c@{\hskip 10pt}cccc@{\hskip 10pt}cccc}
\toprule
& \multicolumn{4}{c}{AUC ($\uparrow$)} & \multicolumn{4}{c}{Final team reward ($\uparrow$)} \\
\cmidrule(lr){2-5} \cmidrule(lr){6-9}
$\alpha$ & MA-GRPO & HA-GRPO & C3 & COSAC & MA-GRPO & HA-GRPO & C3 & COSAC \\
\midrule
$0.0$ & $0.663 \pm 0.022$ & $0.658 \pm 0.050$ & $0.564 \pm 0.023$ & $\mathbf{0.721 \pm 0.029}$ & $0.838 \pm 0.043$ & $0.844 \pm 0.095$ & $0.650 \pm 0.112$ & $\mathbf{0.910 \pm 0.039}$ \\
$0.5$ & $0.689 \pm 0.019$ & $0.646 \pm 0.057$ & $0.629 \pm 0.029$ & $\mathbf{0.747 \pm 0.012}$ & $0.897 \pm 0.047$ & $0.852 \pm 0.086$ & $0.806 \pm 0.072$ & $\mathbf{0.925 \pm 0.028}$ \\
$1.0$ & $0.738 \pm 0.006$ & $0.678 \pm 0.056$ & $0.615 \pm 0.026$ & $\mathbf{0.757 \pm 0.023}$ & $\mathbf{0.922 \pm 0.027}$ & $0.813 \pm 0.079$ & $0.625 \pm 0.081$ & $0.903 \pm 0.038$ \\
\bottomrule
\end{tabular}}
\end{table}

\vspace{-.3em}
\noindent\textbf{Results.}
COSAC attains the highest AUC at every $\alpha$
(Figure~\ref{fig:arc_curves}; Table~\ref{tab:arc_final}) and
the highest final team reward at $\alpha = 0$ and
$\alpha = 0.5$, with COSAC and MA-GRPO statistically tied at
$\alpha = 1$ (Table~\ref{tab:arc_final}). The gap between
COSAC and the next-best critic-free baseline (MA-GRPO) widens
as the reward shifts from independence ($\alpha = 1$) toward
correct-consensus ($\alpha = 0$), reaching $\approx 0.06$ on
AUC and $\approx 0.07$ on final team reward at
$\alpha = 0$.

\vspace{-1em}
\section{Related Work}
\label{sec:related}
\vspace{-1em}

The most directly relevant prior works are discussed inline
where they appear in Sections~\ref{sec:background},
\ref{sec:capo}, and~\ref{sec:theory}. A broader comparison covering
factored value functions, sequential counterfactual estimators, and the wider critic-free
multi-agent LLM line is given in
Appendix~\ref{app:related_work}.

\vspace{-1em}
\section{Conclusion}
\label{sec:conclusion}
\vspace{-1em}

We presented COSAC, a critic-free policy gradient for
cooperative teams that act in sequence. COSAC instantiates the
Sequential Aristocrat Utility (SeqAU), our extension
of~\citet{wolpert2001optimal}'s opacity-learnability framework to
sequential execution, through three closed-form ingredients:
an additive reward decomposition, an upstream-cancellation
identity, and fictitious sampling for the indirect effect.
The estimator admits analytic bias and variance bounds; on a
controlled sequential-bandit testbed it tracks those bounds,
and on a multi-agent LLM reasoning task it attains the highest
AUC across the independence, shaped, and correct-consensus
rewards, with COSAC's lead over the next-best critic-free
baseline (MA-GRPO) widening as the reward shifts from
independence toward correct-consensus.

\noindent\textbf{Limitations and open directions.} Several
caveats bound the scope of this work, each motivating a
natural follow-up. (i) COSAC fits an additive surrogate of
the team reward; heavily interaction-dependent rewards may
call for richer decompositions. (ii) Our analysis and
experiments target the sequential-bandit regime in which the
episode terminates after each agent has acted once; extension
to full sequential MDPs with state evolution between agents
is left to follow-up work. (iii) Our LLM evaluation uses a
single model family (Qwen3-0.6B); generalization across
model sizes and families remains to be characterized. (iv)
COSAC requires a discrete per-agent action set; some tasks might need an explicit action-discretization step (as
in our ARC instantiation, where each agent's utterance is
bucketed into one of five categories).

\bibliographystyle{plainnat}
\bibliography{main}

\newpage
\appendix

\section{Notation}
\label{app:notation}

\begin{table}[H]
\centering
\caption{Notation reference.}
\label{tab:notation}
\footnotesize
\renewcommand{\arraystretch}{0.95}
\begin{tabular}{@{}ll@{}}
\toprule
\multicolumn{2}{@{}l}{\emph{Team and actions (\S\ref{sec:background})}} \\
$K$ & number of agents in the team \\
$k$ & focal agent index \\
$A$ & per-agent action-set size, $A = |\mathcal{A}_k|$ \\
$\mathcal{A}_k$ & agent $k$'s action set \\
$a_k$ & agent $k$'s action \\
$a = (a_1, \ldots, a_K)$ & joint action \\
$a_{<k}$ & prefix (upstream actions $a_1, \ldots, a_{k-1}$) \\
$a_{>k}$ & suffix (downstream actions $a_{k+1}, \ldots, a_K$) \\
$a_{\leq k}$ & prefix plus focal action, $(a_{<k}, a_k)$ \\
$a_{-k}$ & all-but-$k$, equal to $(a_{<k}, a_{>k})$ in the sequential setting \\
$x \sim \mathcal{D}$ & context drawn from distribution $\mathcal{D}$ \\
\midrule
\multicolumn{2}{@{}l}{\emph{Policies}} \\
$\pi_k(\cdot \mid a_{<k}, x)$ & agent $k$'s policy \\
$\pi$ & joint policy (autoregressive product over agents) \\
$\pi_{>k}$ & downstream policy, $\prod_{j > k} \pi_j$ \\
$\mu$ & rollout-collection joint policy at the start of an outer iteration \\
$\pi^{(k-1)}$ & joint policy at the moment of agent $k$'s update \\
\midrule
\multicolumn{2}{@{}l}{\emph{Reward}} \\
$R(a, x)$ & scalar team reward, with $R \in [-R_{\max}, R_{\max}]$ \\
$R_{\max}$ & reward magnitude bound \\
$f(a) = \E[R \mid a]$ & expected team reward \\
\midrule
\multicolumn{2}{@{}l}{\emph{Per-agent utility and advantage (\S\ref{sec:seqau})}} \\
$U_k$ & per-agent utility \\
$b_k(a_{<k})$ & prefix-conditional baseline \\
$b_k^*$ & SeqAU (learnability-maximizing) baseline \\
$\bar U_k(a_k; a_{<k})$ & suffix-averaged utility \\
$A_k^{\mathrm{SeqAU}}$ & SeqAU-induced advantage \\
$\Lambda_k^{\mathrm{seq}}$ & sequential learnability \\
$\rho(\cdot \mid a_{<k})$ & reference kernel for learnability \emph{(distinct from $\rho$ in \S\ref{sec:exp_synth})} \\
\midrule
\multicolumn{2}{@{}l}{\emph{COSAC estimator (\S\ref{sec:capo})}} \\
$N$ & rollouts per outer iteration (batch size) \\
$L$ & fictitious continuations per rollout, per agent \\
$\psi(a) \in \{0,1\}^d$ & per-agent one-hot encoding of the joint action $a$ \\
$\Psi \in \R^{N \times d}$ & feature matrix over the batch \\
$r \in \R^N$ & reward vector over the batch \\
$d = KA$ & feature dimension \\
$\lambda$ & ridge regularizer \\
$\varphi_k$ & true per-agent contribution \\
$\hat\varphi_k$ & ridge-fitted per-agent contribution \\
$\hat A_k^{\mathrm{LS}}$ & COSAC per-agent advantage estimate \\
$D_k$ & direct effect \\
$I_k$ & indirect effect \\
\midrule
\multicolumn{2}{@{}l}{\emph{Theoretical analysis (\S\ref{sec:theory})}} \\
$\varepsilon(a) = \hat f(a) - f(a)$ & additive residual \\
$\|\varepsilon\|_\infty$ & worst-case non-additivity \\
$G_\mu = \E_\mu[\psi\psi^\top]$ & population Gram matrix \\
$\kappa_\mu = \lambda_{\min}(G_\mu)$ & coverage eigenvalue \\
$\Delta_k = \hat A_k^{\mathrm{LS}} - A_k^{\mathrm{SeqAU}}$ & advantage bias \\
$\delta_k^\varepsilon$ & residual sensitivity \\
$\sigma_\varphi^2$ & per-agent contribution variance (MA-GRPO comparison) \\
$Y_j$ & per-agent IS ratio (HA-GRPO comparison) \\
\midrule
\multicolumn{2}{@{}l}{\emph{Experiment-specific (\S\ref{sec:experiments})}} \\
$\lambda_{\mathrm{int}}$ & interaction strength in synthetic bandit \\
$\rho \geq 0$ & non-stationarity strength \emph{(distinct from $\rho$ kernel in \S\ref{sec:seqau})} \\
$\alpha$ & reward-shaping coefficient (ARC) \\
$r_{\mathrm{ind}}$ & independence reward \\
$r_{\mathrm{cons}}$ & correct-consensus reward \\
$r(\alpha)$ & shaped reward \\
$\ell$ & in \S\ref{sec:exp_synth}: second agent index ($g_{k\ell}$); elsewhere: fictitious-sample index, $\ell = 1, \ldots, L$ \\
\bottomrule
\end{tabular}
\end{table}

\section{Extended related work}
\label{app:related_work}

This appendix expands Section~\ref{sec:related} with detailed
comparisons across three threads.

\noindent\textbf{Cooperative MARL credit assignment.}
The opacity-learnability framing that motivates SeqAU originates
with \citet{wolpert2001optimal}. COMA~\citep{foerster2018coma}
operationalized the AU counterfactual through a centralized critic
at $O(|\mathcal{A}|^K)$ marginalization cost. Closer in spirit,
Dr.Reinforce~\citep{castellini2025drpg} combines policy gradients
with aristocrat-utility difference rewards computed from the
environment reward, but targets simultaneous-execution Dec-POMDPs
and has no analog for the indirect effect that arises only under
sequential execution. In the factored-policy limit, COSAC's direct
effect reduces to Dr.Reinforce's aristocrat utility under additive
rewards. Additive
or factored team-reward decomposition in cooperative multi-agent
RL dates back to~\citet{guestrin2002coordinated}
and~\citet{kok2006collaborative}; the decomposed-critic family
moves this principle from the reward to the value function, with
DOP~\citep{wang2021dop} assuming an additively factored $Q$ and
VDN~\citep{sunehag2018vdn}, QMIX~\citep{rashid2018qmix},
FACMAC~\citep{peng2021facmac}, and DAE~\citep{li2022dae} using
related value-side factorizations. All assume \emph{simultaneous}
decentralized execution in Dec-POMDPs, with no mechanism for one
agent's action to enter another agent's input, so the indirect
effect that characterizes the sequential setting is identically zero
by construction. COSAC inherits this additive-reward decomposition principle but
applies it to the reward rather than the value function,
identifying per-agent components by closed-form regression rather
than by gradient-based critic learning.

\noindent\textbf{Sequential cooperative credit assignment.}
The sequential prior work is comparatively sparse.
HAPPO~\citep{kuba2022happo} performs sequential per-agent updates
with a cumulative importance-sampling prefix, paying the
exponential-in-$k$ variance cost analyzed in
Section~\ref{sec:theory}; HA-GRPO is its critic-free GRPO variant.
SeCA~\citep{zang2023seca} imposes a virtual agent ordering and
learns a counterfactual centralized $Q$-critic conditional on the
prefix of preceding agents' actions, but the ordering is virtual
(execution remains simultaneous and downstream agents do not
actually condition on upstream actions), and the centralized $Q$
enumerates discrete actions, intractable for LLM token sequences.
Seq-MADAC~\citep{lu2025seqmadac} uses a HAPPO-style sequential
decomposition for hyperparameter configuration without a
counterfactual baseline. Most relevant conceptually is
\citet{triantafyllou2025cfd}, who formalize the decomposition of
the total counterfactual effect of an agent's action in sequential
multi-agent MDPs into an agent-specific effect (propagating through
subsequent agents' actions, analogous to COSAC's indirect effect)
and a state-specific effect (propagating through state transitions).
Their estimator is a Shapley-value post-hoc attribution; COSAC turns
the same direct-versus-indirect decomposition into a tractable
policy-gradient estimator under the additive-reward assumption. Our work uses learnability maximization baseline which is in contrast with the optimal variance-reduction  baseline~\citep{kuba2021settling} for multi-agent settings.

\noindent\textbf{Critic-free policy gradients for multi-agent LLMs.}
A recent line adapts GRPO~\citep{deepseek2025}'s critic-free
within-group baseline to multi-agent LLM pipelines, with two
families. Shared-baseline methods (MA-GRPO~\citep{liu2025magrpo},
AT-GRPO~\citep{atgrpo2025}, M-GRPO~\citep{mgrpo2025}) assign every
agent the same trajectory or hierarchy-level advantage; HA-GRPO
adds HAPPO's cumulative importance-sampling prefix to correct
sequential-update drift. Counterfactual methods estimate per-agent
credit by intervening on upstream actions and observing downstream
outputs; the two closest contemporaneous works are
C3~\citep{chen2026c3} and CCPO~\citep{li2026ccpo}, discussed in
detail in Section~\ref{sec:capo}. Shapley-style
alternatives such as SHARP~\citep{sharp2026} and Lazy
Agents~\citep{zhang2025lazy} use multi-step causal attribution at
correspondingly higher per-step cost; single-agent hindsight-credit
work~\citep{hcapo2026} is in the same wave but a different setting.
COSAC's distinguishing combination against this line is critic-free,
reward-side (closed-form ridge regression rather than learned $Q$
or environment replay), uses current-policy fictitious continuations
(rather than frozen-prefix replay), and is generic across sequential
chains (no topology assumption beyond fixed execution order).

\section{Sequential learnability: factoredness and optimality}
\label{app:seqau}

This appendix gives the formal treatment deferred from
Section~\ref{sec:seqau}.
Appendix~\ref{app:opacity} defines opacity;
Appendix~\ref{app:factored_def} defines factoredness;
Appendix~\ref{app:factored_prop} shows that sequential difference
utilities form a factored class; and
Appendix~\ref{app:seqau_proof} proves Theorem~\ref{thm:seqau}.
Sequential learnability is defined in the main text
(Definition~\ref{def:seqlearn}).

\subsection{Opacity}
\label{app:opacity}

\begin{definition}[Opacity]\label{def:opacity}
Consider a cooperative team in which the joint action of agents
other than $k$ is denoted $a_{-k}$. The \emph{opacity} of a
per-agent utility $U_k$ at focal action $a_k$, given an
information set $\mathcal{I}$ on which $U_k$ is allowed to depend,
is the conditional variance of $U_k$ over $a_{-k}$:
\[
    \Omega_k(U_k; a_k, \mathcal{I})
    \;:=\;
    \Var_{a_{-k}}\!\bigl[U_k(a_{-k}, a_k) \,\big|\, a_k, \mathcal{I}\bigr].
\]
\end{definition}

Low opacity means the residual variance in $U_k$ at fixed focal
action is small, so agent $k$ sees the effect of its own action
on $U_k$ without being confounded by teammate randomness. High
opacity means the focal-action signal is drowned in teammate
noise. The definition is general: it applies to any cooperative
multi-agent system, with the sequential setting recovered by
taking $a_{-k} = (a_{<k}, a_{>k})$.

\subsection{Factoredness}
\label{app:factored_def}

\begin{definition}[Factoredness]\label{def:factored}
Consider a cooperative team in which the joint action of agents
other than $k$ is denoted $a_{-k}$. A per-agent utility $U_k$ is
\emph{factored with respect to the team reward $R$} if, for all
$a_k^1, a_k^2 \in \mathcal{A}_k$ and all teammate actions
$a_{-k}$,
\[
    \mathrm{sgn}\!\bigl[U_k(a_{-k}, a_k^1) - U_k(a_{-k}, a_k^2)\bigr]
    \;=\;
    \mathrm{sgn}\!\bigl[R(a_{-k}, a_k^1) - R(a_{-k}, a_k^2)\bigr].
\]
\end{definition}

Factoredness aligns per-agent utility with team reward at every
joint configuration: any move that improves $U_k$ at agent $k$
(with teammates' actions fixed) also improves $R$, and conversely.
A utility satisfying factoredness is therefore a valid target for
agent $k$ to optimize in place of $R$. As with opacity, the
definition is general; the sequential setting corresponds to
$a_{-k} = (a_{<k}, a_{>k})$.

\subsection{Sequential difference utilities are factored}
\label{app:factored_prop}

\begin{proposition}\label{prop:factored}
For any function $b_k : \prod_{j<k} \mathcal{A}_j \to \R$, the
utility $U_k(a_{<k}, a_k, a_{>k}) = R(a_{<k}, a_k, a_{>k}) - b_k(a_{<k})$
satisfies the factoredness condition of
Definition~\ref{def:factored} (with $a_{-k} = (a_{<k}, a_{>k})$).
\end{proposition}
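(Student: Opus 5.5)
The proof is a direct verification from Definition~\ref{def:factored}, and the key observation is that the baseline cancels in any difference taken at fixed teammate actions. I will fix an arbitrary agent $k$, an arbitrary teammate configuration $a_{-k} = (a_{<k}, a_{>k})$, and an arbitrary pair of focal moves $a_k^1, a_k^2 \in \mathcal{A}_k$. Holding $a_{-k}$ fixed also fixes the prefix $a_{<k}$. Since $b_k$ depends only on the prefix, $b_k(a_{<k})$ takes the same value at $(a_{<k}, a_k^1, a_{>k})$ and at $(a_{<k}, a_k^2, a_{>k})$.

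The central computation is therefore
\[
U_k(a_{<k}, a_k^1, a_{>k}) - U_k(a_{<k}, a_k^2, a_{>k})
= \bigl[R(a_{<k}, a_k^1, a_{>k}) - b_k(a_{<k})\bigr] - \bigl[R(a_{<k}, a_k^2, a_{>k}) - b_k(a_{<k})\bigr],
\]
which simplifies to $R(a_{<k}, a_k^1, a_{>k}) - R(a_{<k}, a_k^2, a_{>k})$. The two differences are equal as real numbers, so their signs agree. Because the configuration and the pair of moves were arbitrary, the factoredness condition holds for every $a_{-k}$ and every pair $(a_k^1, a_k^2)$.

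There is no real obstacle here. The only point to state explicitly is the interpretation of the quantifier in Definition~\ref{def:factored}. The suffix $a_{>k}$ is held fixed as part of $a_{-k}$ when the focal action is varied; it is not resampled from $\pi_{>k}(\cdot \mid a_{\leq k})$. Under this pointwise reading the baseline depends on nothing that changes, so the cancellation is exact.

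I will also remark that the argument never uses the prefix-only restriction on $b_k$. The conclusion holds for any baseline that is a function of $a_{-k}$ alone, which recovers the simultaneous-execution statement of \citet{wolpert2001optimal}. The prefix restriction matters only for the learnability optimization in Theorem~\ref{thm:seqau}, not for factoredness.
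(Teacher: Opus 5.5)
Your proof is correct and is essentially the paper's argument. You fix $a_{-k} = (a_{<k}, a_{>k})$ and the two focal moves, note that $b_k(a_{<k})$ cancels, and conclude that the $U_k$- and $R$-differences are equal as real numbers and so have the same sign. Your side remarks are also correct: the definition is pointwise in the suffix, and any baseline depending only on $a_{-k}$ would be factored.
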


\begin{proof}
For any teammate action $a_{-k} = (a_{<k}, a_{>k})$ and any
$a_k^1, a_k^2 \in \mathcal{A}_k$, the baseline $b_k(a_{<k})$
depends only on the prefix and therefore cancels in the
difference:
\[
    U_k(a_{-k}, a_k^1) - U_k(a_{-k}, a_k^2)
    \;=\;
    R(a_{-k}, a_k^1) - R(a_{-k}, a_k^2),
\]
so the signs coincide. \qed
\end{proof}

Proposition~\ref{prop:factored} justifies restricting the SeqAU
search to sequential difference utilities: every such utility is
a valid optimization target, so the question
Theorem~\ref{thm:seqau} answers is which baseline $b_k$ within this
class maximizes learnability.

\subsection{Proof of Theorem~\ref{thm:seqau}}
\label{app:seqau_proof}

We substitute the utility
$U_k(a_{<k}, a_k, a_{>k}) = R(a_{<k}, a_k, a_{>k}) - b_k(a_{<k})$
into the sequential learnability expression~\eqref{eq:seqlearn}
and minimize the denominator while showing the numerator is
independent of $b_k$.

\noindent\textbf{Numerator.}
For any $a_k$,
\[
    \bar U_k(a_k; a_{<k})
    \;=\;
    \E_{a_{>k} \sim \pi_{>k}(\cdot \mid a_{\leq k})}\!\bigl[R(a_{<k}, a_k, a_{>k}) - b_k(a_{<k})\bigr]
    \;=\;
    \bar R_k(a_k; a_{<k}) - b_k(a_{<k}).
\]
Since $b_k(a_{<k})$ does not depend on $a_k$, it cancels in the
difference
$\bar U_k(a_k^1; a_{<k}) - \bar U_k(a_k^2; a_{<k}) = \bar R_k(a_k^1; a_{<k}) - \bar R_k(a_k^2; a_{<k})$,
and the numerator of~\eqref{eq:seqlearn} is independent of $b_k$.

\noindent\textbf{Denominator.}
By the law of total variance, conditioning on $a_{<k}$ given
$a_k$,
\[
    \Var_{(a_{<k}, a_{>k}) \mid a_k}[U_k]
    \;=\;
    \Var_{a_{<k} \mid a_k}\!\bigl[\bar U_k(a_k; a_{<k})\bigr]
    \;+\;
    \E_{a_{<k} \mid a_k}\!\bigl[\Var_{a_{>k} \mid a_{\leq k}}[U_k]\bigr].
\]
The inner variance in the second term is taken over $a_{>k}$ for
fixed $a_{\leq k}$. Since $b_k(a_{<k})$ is constant under this
conditioning, the second term equals
$\E_{a_{<k} \mid a_k}[\Var_{a_{>k} \mid a_{\leq k}}[R]]$ and does not
depend on $b_k$. Maximizing learnability therefore reduces to
minimizing
$\Var_{a_{<k} \mid a_k}[\bar R_k(a_k; a_{<k}) - b_k(a_{<k})]$.

Taking the outer expectation $\E_{a_k \sim \rho(\cdot \mid a_{<k})}$
in the denominator of the learnability expression and rewriting
the variance as a centered second moment, the minimization
reduces to the quadratic loss
\[
    \min_{b_k}\;
    \E_{a_{<k},\, a_k \sim \rho(\cdot \mid a_{<k})}\!\bigl[(\bar R_k(a_k; a_{<k}) - b_k(a_{<k}))^2\bigr].
\]
Pointwise in $a_{<k}$, the minimizer of this loss is the
conditional expectation of $\bar R_k(a_k; a_{<k})$ under
$\rho(\cdot \mid a_{<k})$,
\[
    b_k^*(a_{<k})
    \;=\;
    \E_{a_k \sim \rho(\cdot \mid a_{<k})}\!\bigl[\bar R_k(a_k; a_{<k})\bigr]
    \;=\;
    \E_{a_k' \sim \rho(\cdot \mid a_{<k}), \; a_{>k}' \sim \pi_{>k}(\cdot \mid a_{<k}, a_k')}
    \!\bigl[R(a_{<k}, a_k', a_{>k}')\bigr].
\]
The quadratic loss is strictly convex in $b_k(a_{<k})$ at every
prefix in the support of the prefix marginal, so the minimizer is
unique on that support. Moreover $b_k^*(a_{<k})$ does not depend
on the choice of $(a_k^1, a_k^2)$, so it simultaneously maximizes
$\Lambda_k^{\mathrm{seq}}$ at every prefix and every move pair.
This completes the proof. \qed

\section{Derivation of the upstream-cancellation identity}
\label{app:upstream_cancellation}

We derive the direct/indirect decomposition of the COSAC advantage
stated in Eq.~\eqref{eq:advantage_decomp}. Substitute the additive
model $\hat f(a) = \sum_{j=1}^K \hat\varphi_j(a_j)$ into the
SeqAU counterfactual advantage
$A_k^{\mathrm{LS}}(a_{\leq k}) = Q_k^{\mathrm{LS}}(a_{\leq k}) -
V_k^{\mathrm{LS}}(a_{<k})$ of~\eqref{eq:Acfact}. Expanding $Q_k$ under
additivity and splitting the sum over $j$ into upstream ($j < k$),
focal ($j = k$), and downstream ($j > k$):
\begin{align}
    Q_k^{\mathrm{LS}}(a_{\leq k})
    &\;=\; \E_{\pi^{(k-1)}}\!\Bigl[\textstyle\sum_{j=1}^K \hat\varphi_j(a_j) \;\Big|\; a_{\leq k}\Bigr] \notag \\
    &\;=\;
    \underbrace{\sum_{j<k}\hat\varphi_j(a_j)}_{\text{upstream}}
    \;+\;
    \hat\varphi_k(a_k)
    \;+\;
    \sum_{j>k}\E_{\pi^{(k-1)}}\!\bigl[\hat\varphi_j(a_j) \mid a_{\leq k}\bigr],
    \label{eq:Qk_expanded}
\end{align}
where the upstream terms exit the expectation because each
$\hat\varphi_j$ with $j < k$ is a function of $a_j$ alone (under
additivity) and therefore a deterministic function of the prefix
$a_{<k}$ that $Q_k$ already conditions on. Similarly expanding
$V_k$:
\begin{align}
    V_k^{\mathrm{LS}}(a_{<k})
    &\;=\; \E_{\pi^{(k-1)}}\!\Bigl[\textstyle\sum_{j=1}^K \hat\varphi_j(a_j) \;\Big|\; a_{<k}\Bigr] \notag \\
    &\;=\;
    \underbrace{\sum_{j<k}\hat\varphi_j(a_j)}_{\text{same upstream}}
    \;+\;
    \E_{\pi^{(k-1)}}\!\bigl[\hat\varphi_k(a_k) \mid a_{<k}\bigr]
    \;+\;
    \sum_{j>k}\E_{\pi^{(k-1)}}\!\bigl[\hat\varphi_j(a_j) \mid a_{<k}\bigr].
    \label{eq:Vk_expanded}
\end{align}
The upstream sum $\sum_{j<k}\hat\varphi_j(a_j)$ is identical in
Eqs.~\eqref{eq:Qk_expanded} and~\eqref{eq:Vk_expanded}: both
conditional expectations are taken under the same joint policy
$\pi^{(k-1)}$ and both condition on the same prefix $a_{<k}$; the
upstream terms depend only on $a_{<k}$ and are therefore deterministic
constants under both conditionings. Subtracting~\eqref{eq:Vk_expanded}
from~\eqref{eq:Qk_expanded}, the upstream contributions cancel
identically, yielding
\[
    A_k^{\mathrm{LS}}(a_{\leq k})
    \;=\;
    \underbrace{
        \hat\varphi_k(a_k) \;-\; \E_{\pi^{(k-1)}}[\hat\varphi_k(a_k) \mid a_{<k}]
    }_{D_k}
\]
\[
    \;+\;
    \underbrace{
        \sum_{j>k}\!\Bigl(
            \E_{\pi^{(k-1)}}[\hat\varphi_j(a_j) \mid a_{\leq k}]
            \;-\;
            \E_{\pi^{(k-1)}}[\hat\varphi_j(a_j) \mid a_{<k}]
        \Bigr)
    }_{I_k}.
\]
The conditional expectation in $D_k$ is taken over $a_k$ alone (since
$\hat\varphi_k$ depends only on $a_k$), and under the autoregressive
factorization of $\pi^{(k-1)}$ this is the marginal expectation
$\E_{a_k' \sim \pi_k(\cdot \mid a_{<k})}[\hat\varphi_k(a_k')]$, giving
the form of Eq.~\eqref{eq:advantage_decomp}. \qed

\section{COSAC algorithm and per-iteration cost}
\label{app:algorithm}

Algorithm~\ref{alg:capo} specifies one outer iteration of COSAC:
collect $N$ joint rollouts under the current policy $\mu$, fit
$\hat\varphi$ by ridge regression on the batch, then update agents
in natural execution order. For each agent $k$ and base rollout
$n$, a fresh Q-set $\mathcal{Q}_k^{(n)}$ of $L$ downstream samples is drawn
from the current downstream policies; the V-set $\mathcal{V}_k^{(n)}$
is reused (the $N$ base rollouts for $k = 1$, agent $(k-1)$'s
Q-set for $k > 1$). The two sets combine with $\hat\varphi$ to
form the per-rollout advantage $\hat A_k^{(n)}$ via
Eq.~\eqref{eq:fict_QV}; agent $k$ then takes $M$
PPO-clipped~\citep{schulman2017ppo} steps before control passes
to agent $k+1$.

\begin{algorithm}[ht]
\caption{COSAC: Counterfactual Sequential Credit Assignment in Cooperative Teams
(one outer iteration).}
\label{alg:capo}
\begin{algorithmic}[1]
\Require Current joint policy $\mu$, ridge parameter $\lambda$, group size $N$, fictitious-sample count $L$, inner PPO step count $M$, learning rate $\eta$.
\State Sample $N$ joint trajectories $\{(a^{(n)}, R^{(n)})\}_{n=1}^N$ from $\mu$.
\State Build feature matrix $\Psi \in \R^{N \times d}$ with $\Psi_{n, (k, a_k^{(n)})} = 1$.
\State $\hat\varphi \gets (\lambda I + \Psi^\top \Psi)^{-1} \Psi^\top r$ \Comment{ridge solve; $O(d^3)$}
\State $\pi^{(0)} \gets \mu$ \Comment{initial within-batch joint policy}
\State $\mathcal{V}_1 \gets \{a^{(m)}\}_{m=1}^N$ \Comment{V-samples for agent 1: reuse base rollouts}
\For{$k = 1, \ldots, K$} \Comment{update agents in execution order}
    \For{$n = 1, \ldots, N$}
        \State Draw $L$ suffix samples $\mathcal{Q}_k^{(n)} = \{\tilde a_{>k}^{(n,\ell)}\}_{\ell=1}^L$ from $\pi_{>k}^{(k-1)}(\cdot \mid a_{\leq k}^{(n)})$ \Comment{Q-set: $L$ fresh draws}
        \State Compute $\hat A_k^{(n)}$ from $\hat\varphi$, $\mathcal{Q}_k^{(n)}$, and $\mathcal{V}_k^{(n)}$ \Comment{eq.~\eqref{eq:fict_QV}}
    \EndFor
    \State Run $M$ PPO-clipped steps~\citep{schulman2017ppo} on $\pi_k$ with advantages $\{\hat A_k^{(n)}\}$ to obtain $\pi_k^{(k)}$.
    \State $\pi^{(k)} \gets (\pi_1^{(1)}, \ldots, \pi_k^{(k)}, \mu_{k+1}, \ldots, \mu_K)$
    \State $\mathcal{V}_{k+1} \gets \mathcal{Q}_k$ \Comment{V-samples for next agent: reuse current Q-set (no fresh draws)}
\EndFor
\State \Return $\pi^{(K)}$.
\end{algorithmic}
\end{algorithm}

\noindent\textbf{Per-iteration cost.}
The total per-iteration cost has three components: (i) one ridge
solve on the batch, $O(d^3)$ with $d = KA$ (or $d = \sum_k C_k$
for clustered features); (ii) $NKL$ fictitious continuations
across all agents and rollouts, all of which are policy forward
passes with no reward evaluations or environment interactions;
and (iii) $K \cdot M$ inner policy-gradient updates on the
per-agent advantages. The naive bookkeeping draws two
$L$-sample sets per agent per rollout (a Q-set conditioning on
$a_{\leq k}^{(n)}$ and a V-set conditioning on $a_{<k}^{(n)}$),
giving $2NKL$ continuations; we cut this in half by reusing
samples across agents. Agent $k$'s V-set conditions on
$a_{<k}^{(n)} = a_{\leq k-1}^{(n)}$ and resamples
$(a_k, a_{>k})$ under $\mu_k \cdot \mu_{>k}$, which is exactly
the distribution of agent $(k-1)$'s Q-set; for agent $1$, the
V-set conditions on the empty prefix, so the $N$ base rollouts
themselves serve as V-samples. Only the per-agent Q-sets need
to be drawn fresh, totaling $NKL$. COSAC matches MA-GRPO and
HA-GRPO on the number of real rollouts and reward evaluations
($N$ trajectories with one reward evaluation each); the only
cost over those baselines is the ridge solve plus the
policy-side fictitious draws, neither of which scales with the
cost of the reward signal. C3, in contrast, pays $NKL$ extra
reward evaluations per iteration to replay downstream pipelines
in the actual environment.

\section{Proofs of the theoretical properties (Section~\ref{sec:theory})}
\label{app:theory_proofs}

This appendix collects the proofs of Theorems~\ref{thm:bias}
and~\ref{thm:var}.

\subsection{Proof of Theorem~\ref{thm:bias} (general bias bound)}
\label{app:proof_bias}

Let $\hat Q_k(a_{\leq k}) = \E_{\pi^{(k-1)}}[\hat f \mid a_{\leq k}]$
and $\hat V_k(a_{<k}) = \E_{\pi^{(k-1)}}[\hat f \mid a_{<k}]$, and let
$Q_k, V_k$ denote the corresponding quantities for the true
conditional mean $f$. By definition,
$\hat A_k^{\mathrm{LS}} = \hat Q_k - \hat V_k$ and
$A_k = Q_k - V_k$, so
\[
    \Delta_k(a_{\leq k})
    \;=\; \bigl(\hat Q_k(a_{\leq k}) - Q_k(a_{\leq k})\bigr)
    \;-\; \bigl(\hat V_k(a_{<k}) - V_k(a_{<k})\bigr).
\]
Using linearity of expectation and $\hat f - f = \varepsilon$,
\[
    \hat Q_k(a_{\leq k}) - Q_k(a_{\leq k})
    \;=\; \E_{\pi_{>k} \mid a_{\leq k}}\!\bigl[\varepsilon(a)\bigr],
    \qquad
    \hat V_k(a_{<k}) - V_k(a_{<k})
    \;=\; \E_{\pi_{\geq k} \mid a_{<k}}\!\bigl[\varepsilon(a)\bigr].
\]
Expanding the $V_k$-residual by conditioning on $a_k$,
\[
    \E_{\pi_{\geq k} \mid a_{<k}}[\varepsilon]
    \;=\; \E_{a_k \sim \pi_k}\!\bigl[\E_{\pi_{>k} \mid a_{\leq k}}[\varepsilon]\bigr],
\]
so
\[
    \Delta_k(a_{\leq k})
    \;=\; \E_{\pi_{>k} \mid a_{\leq k}}[\varepsilon]
    \;-\; \E_{a_k' \sim \pi_k,\, a_{>k}' \sim \pi_{>k}(\cdot \mid a_{<k}, a_k')}[\varepsilon].
\]
Write this as a difference of conditional expectations of a single
random quantity, $\varepsilon(a)$, under two different distributions
over $(a_k, a_{>k})$: one fixes $a_k$ at its observed value and
averages $a_{>k}$ under $\pi_{>k}(\cdot \mid a_{\leq k})$; the other
averages $a_k$ under $\pi_k(\cdot \mid a_{<k})$ and, conditional on
$a_k$, $a_{>k}$ under $\pi_{>k}(\cdot \mid a_{<k}, a_k)$. Decomposing
by the tower property,
\[
    \Delta_k(a_{\leq k})
    \;=\;
    \E_{a_k' \sim \pi_k}\!\left[
        \E_{\pi_{>k} \mid a_{\leq k}}[\varepsilon(a_{<k}, a_k, a_{>k})]
        - \E_{\pi_{>k} \mid a_{<k}, a_k'}[\varepsilon(a_{<k}, a_k', a_{>k}')]
    \right].
\]
Adding and subtracting
$\E_{\pi_{>k} \mid a_{<k}, a_k'}[\varepsilon(a_{<k}, a_k, a_{>k})]$
inside the bracket splits the integrand into two channels. The first
channel,
\[
    \E_{\pi_{>k} \mid a_{\leq k}}[\varepsilon(a_{<k}, a_k, a_{>k})]
    - \E_{\pi_{>k} \mid a_{<k}, a_k'}[\varepsilon(a_{<k}, a_k, a_{>k})],
\]
captures how the downstream-conditioning distribution changes with
the focal action; it is bounded by
$\|\varepsilon\|_\infty \cdot \|\pi_{>k}(\cdot \mid a_{<k}, a_k)
- \pi_{>k}(\cdot \mid a_{<k}, a_k')\|_1$. The second channel,
\[
    \E_{\pi_{>k} \mid a_{<k}, a_k'}[\varepsilon(a_{<k}, a_k, a_{>k})
    - \varepsilon(a_{<k}, a_k', a_{>k})],
\]
captures the sensitivity of the residual to the focal action; it is
bounded in absolute value by
$\delta_k^\varepsilon(a_{-k}) =
\max_{a_k, a_k'}|\varepsilon(a_{<k}, a_k, a_{>k})
- \varepsilon(a_{<k}, a_k', a_{>k})|$.
Taking absolute values and the $\pi_k$ expectation,
\[
    |\Delta_k(a_{\leq k})|
    \;\leq\;
    \E_{\pi_{\geq k} \mid a_{<k}}[\delta_k^\varepsilon(a_{-k})]
    \;+\;
    \|\varepsilon\|_\infty \cdot
    \max_{a_k, a_k'}\|\pi_{>k}(\cdot \mid a_{<k}, a_k)
    - \pi_{>k}(\cdot \mid a_{<k}, a_k')\|_1,
\]
which is the statement of Theorem~\ref{thm:bias}. \qed

\subsection{Specialization under factored policies}
\label{app:proof_bias_factored}

Under a factored joint policy $\pi(a) = \prod_k \pi_k(a_k)$, the
downstream conditional distribution is independent of the focal
action: $\pi_{>k}(\cdot \mid a_{<k}, a_k) = \pi_{>k}(\cdot \mid a_{<k})
= \prod_{j > k} \pi_j(\cdot)$. Hence the second channel of the
bound in Theorem~\ref{thm:bias} vanishes identically. From the first
channel, using the factored structure,
\[
    \E_{\pi_{\geq k} \mid a_{<k}}[\delta_k^\varepsilon(a_{-k})]
    \;=\;
    \E_{a_j \sim \pi_j,\, j \neq k}\!\bigl[\delta_k^\varepsilon(a_{-k})\bigr].
\]
The residual $\varepsilon(a) = \hat f(a) - f(a)$ is
computed from $\hat\varphi$ fit on $\mu$-rollouts, and the advantage
we want is under $\pi$-rollouts. When $\mu_j \neq \pi_j$, the
ridge-fit $\hat f$ differs between the two measures only through the
Gram matrix; under factored rollouts the mismatch propagates as
\[
    \bigl|\E_{\pi_{-k}}[\varepsilon(a_{<k}, a_k, a_{>k})]
    - \E_{\mu_{-k}}[\varepsilon(a_{<k}, a_k, a_{>k})]\bigr|
    \;\leq\; \|\varepsilon\|_\infty \cdot \sum_{j \neq k} \|\pi_j - \mu_j\|_1.
\]
This is the standard bound for changing measures under a bounded
integrand. Substituting and applying the triangle inequality,
\[
    |\Delta_k(a_k)|
    \;\leq\; 2 \|\varepsilon\|_\infty \cdot \sum_{j \neq k} \|\pi_j - \mu_j\|_1
    \;\leq\; 2(K-1) \bar\delta \|\varepsilon\|_\infty,
\]
using $\bar\delta = \max_j \|\pi_j - \mu_j\|_1$, which is the
specialization claimed in the second half of Theorem~\ref{thm:bias}.
\qed

\subsection{Proof of Theorem~\ref{thm:var} (advantage variance)}
\label{app:proof_var}

The ridge-regression solution is
$\hat\varphi = (\lambda I + \Psi^\top\Psi)^{-1}\Psi^\top r$. We
work at the population level, replacing $\Psi^\top\Psi$ by
$N G_\mu$; the sample fluctuations contribute an $O(1/\sqrt{N})$
correction that does not affect the $K$-scaling.

\noindent\textbf{Variance of $\hat\varphi$.}
Bounded rewards $|r_n| \leq R_{\max}$ give
$\Cov(r) \preceq R_{\max}^2 I$. With
$M = (\lambda I + N G_\mu)^{-1}\Psi^\top$,
\[
    \Cov(\hat\varphi)
    \;=\; M\,\Cov(r)\,M^\top
    \;\preceq\; R_{\max}^2\,M M^\top
    \;\preceq\; R_{\max}^2\,(\lambda I + N G_\mu)^{-1}
    \;\preceq\; \frac{R_{\max}^2}{\lambda + N\kappa_\mu}\,I,
\]
using
$M M^\top = (\lambda I + N G_\mu)^{-1}\Psi^\top\Psi(\lambda I + N G_\mu)^{-1} \preceq (\lambda I + N G_\mu)^{-1}$
and
$(\lambda I + N G_\mu)^{-1} \preceq (\lambda + N\kappa_\mu)^{-1} I$.
The bound holds for any $f$; a non-additive residual feeds into
the \emph{bias} of $\hat\varphi$ (Theorem~\ref{thm:bias}) but not
its variance.

\noindent\textbf{Variance of $D_k$.}
The direct-effect component
$D_k(a_{\leq k}) = \hat\varphi_k(a_k) - \E_{\pi_k(\cdot \mid a_{<k})}[\hat\varphi_k]$
is a linear functional $w_D^\top \hat\varphi$ with one indicator
entry of weight $1$ on $\hat\varphi_k(a_k)$ and a probability
vector of $\ell_2$-norm at most $1$ on the marginal expectation,
so $\|w_D\|_2^2 \leq 2$. Hence
\[
    \Var\!\bigl(D_k(a_{\leq k})\bigr)
    \;=\; w_D^\top \Cov(\hat\varphi)\, w_D
    \;\leq\; \frac{2 R_{\max}^2}{\lambda + N\kappa_\mu},
\]
independent of $K$ and $d$.

\noindent\textbf{Variance of $I_k$ from fictitious sampling.}
Conditional on $\hat\varphi$, the indirect-effect estimator
$\hat I_k = \tfrac{1}{L}\sum_\ell Y^{(\ell)}$ averages $L$ i.i.d.\
copies of
$Y^{(\ell)} := \hat f^{>k}(\tilde a_{>k}^{(\ell)}) - \hat f^{>k}(\tilde a_{>k}^{\prime(\ell)})$,
where $\hat f^{>k}(a_{>k}) := \sum_{j>k}\hat\varphi_j(a_j)$. With
each per-agent component $\hat\varphi_j$ bounded by $R_{\max}$ at
the population level, $\hat f^{>k}$ is a sum of $K-k$ bounded
contributions and $\Var(Y^{(\ell)} \mid \hat\varphi) = O(K)\,R_{\max}^2$.
Hence
\[
    \Var(\hat I_k \mid \hat\varphi)
    \;\leq\; \frac{O(K)\,R_{\max}^2}{L},
\]
and the marginal variance over $\hat\varphi$ inherits the same
$O(K/L)$ rate when the Monte Carlo resolution dominates the ridge
fluctuations.

\noindent\textbf{Combining.}
By $\Var(X+Y) \leq 2(\Var(X) + \Var(Y))$,
\[
    \Var(\hat A_k^{\mathrm{LS}})
    \;\leq\;
    2\bigl(\Var(D_k) + \Var(I_k)\bigr),
\]
which gives the bound stated in the theorem. \qed

\section{Per-agent advantage MSE at \texorpdfstring{$K = 16$}{K = 16}}
\label{app:per_agent_mse}

Figure~\ref{fig:mse_per_agent_K16} breaks the $K = 16$ result of
Section~\ref{sec:exp_synth} down by agent index. COSAC's MSE is
essentially flat across agents, consistent with the $K$- and
position-independence of Theorem~\ref{thm:var}. HA-GRPO grows with
agent index as the cumulative importance-sampling prefix accumulates
noise at downstream agents. MA-GRPO is flat at a high level
(shared-baseline variance does not depend on position). C3 is
comparable to COSAC on average but with markedly higher variance
across agent positions due to prefix-filtering effective-sample-size
loss.

\begin{figure}[H]
\centering
\includegraphics[width=0.95\textwidth]{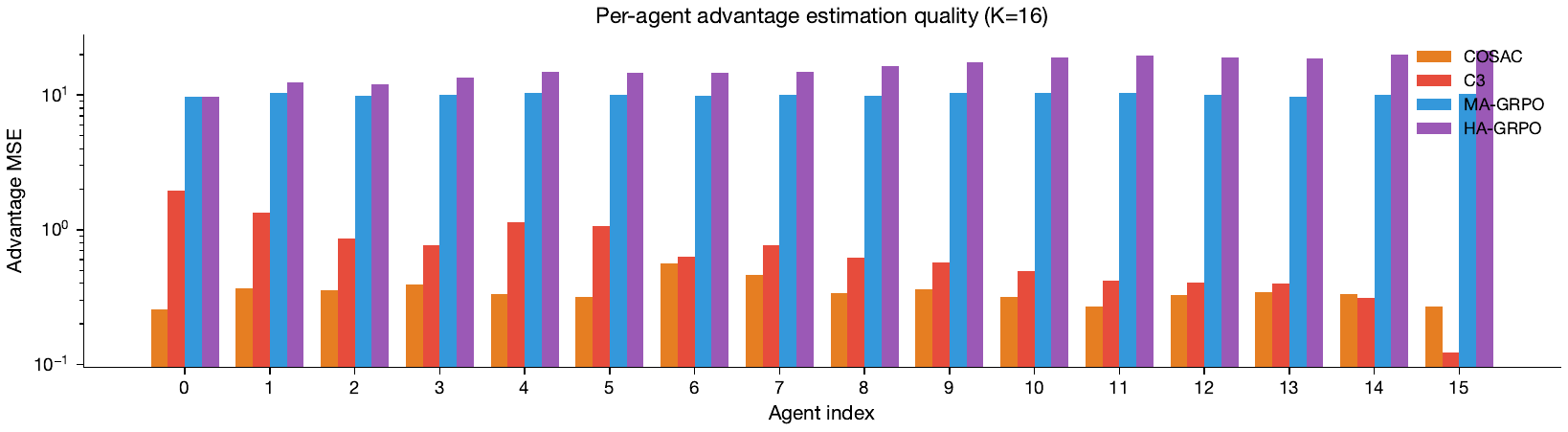}
\caption{Per-agent advantage MSE at $K = 16$, broken down by agent
index.}
\label{fig:mse_per_agent_K16}
\end{figure}

\section{Non-stationarity parametrization in the optimization experiment}
\label{app:rho_parametrization}

\textbf{Motivation.} In real-world sequential cooperative
teams, an upstream agent's action typically biases the
downstream agent toward going along with that choice rather
than away from it (a planner picking a tool nudges the executor
toward tool-compatible follow-ups; an early reasoning step
biases later steps toward consistent conclusions). We therefore
want a controlled testbed in which downstream conditionals
exhibit a \emph{positive} correlation with the upstream action,
and in which we can dial that correlation up or down to study
its effect on credit attribution. The scalar $\rho$ plays this
role: it monotonically increases the agreement between an
agent's action and its upstream context, although it is not
itself the correlation coefficient (the induced correlation is
a smooth, monotone function of $\rho$ that also depends on the
random base conditional).

The non-stationarity parameter $\rho$ used in
Section~\ref{sec:exp_synth} controls how strongly each agent's
conditional policy depends on its upstream action, which in turn
determines the severity of the distribution shift over downstream
behavior when an upstream agent's policy is updated. Concretely, the
initial joint policy is drawn as follows. For each agent $k \geq 1$
and each upstream-action value $c \in \mathcal{A}_{k-1}$, we first
sample a base conditional distribution
$\pi^{\mathrm{base}}_k(\cdot \mid c) \sim \mathrm{Dirichlet}(\mathbf{1})$
over $\mathcal{A}_k$, and then concentrate it toward the diagonal by
the exponential tilt
\begin{equation}\label{eq:rho_init}
    \pi_k(a \mid c)
    \;\propto\;
    \pi^{\mathrm{base}}_k(a \mid c) \cdot \exp\bigl(\rho \cdot \mathbf{1}[a = c]\bigr).
\end{equation}
At $\rho = 0$ the tilt is trivial and
$\pi_k(\cdot \mid c) = \pi^{\mathrm{base}}_k(\cdot \mid c)$, which is
approximately uniform. As $\rho$ grows, mass concentrates on the
action $a = c$, so agent $k$'s behavior becomes increasingly dictated
by agent $k-1$'s action. This does \emph{not} change what optima the
team reward admits---$\varphi_k$ and $g_{k\ell}$ are drawn
independently of $\rho$---but it determines how much the distribution
over downstream behavior shifts when an upstream agent's policy is
updated during training. Agent 0, which has no upstream conditioning
variable, is drawn uniformly from the simplex regardless of $\rho$.
Across the three values $\rho \in \{0, 1, 2\}$ that we sweep in
the optimization comparison and the four values
$\rho \in \{0, 5, 10, 20\}$ swept in the direct-effect ablation
(Section~\ref{sec:exp_synth} and
Appendix~\ref{app:direct_ablation_trajectories}), the diagonal mass of the initial
conditional policies increases from near $1/|\mathcal{A}|$
(near-uniform) to near $1$ (near-deterministic), spanning the range
from mild to severe sequential-update non-stationarity.

\section{Per-cell regret summary for the optimization experiment}
\label{app:regret_trajectories}

Figure~\ref{fig:auc_panel_appendix} shows the full per-cell regret
breakdown corresponding to Table~\ref{tab:optim_auc} in
Section~\ref{sec:exp_synth}: normalized regret as a function of
$\lambda_{\mathrm{int}}$ for each $(K, \rho)$ combination. The
per-cell view is consistent with the summary table: MA-GRPO is
competitive with COSAC at small $K$, COSAC pulls ahead for $K \geq
6$, and HA-GRPO degrades rapidly with $K$.

\begin{figure}[H]
\centering
\includegraphics[width=\textwidth]{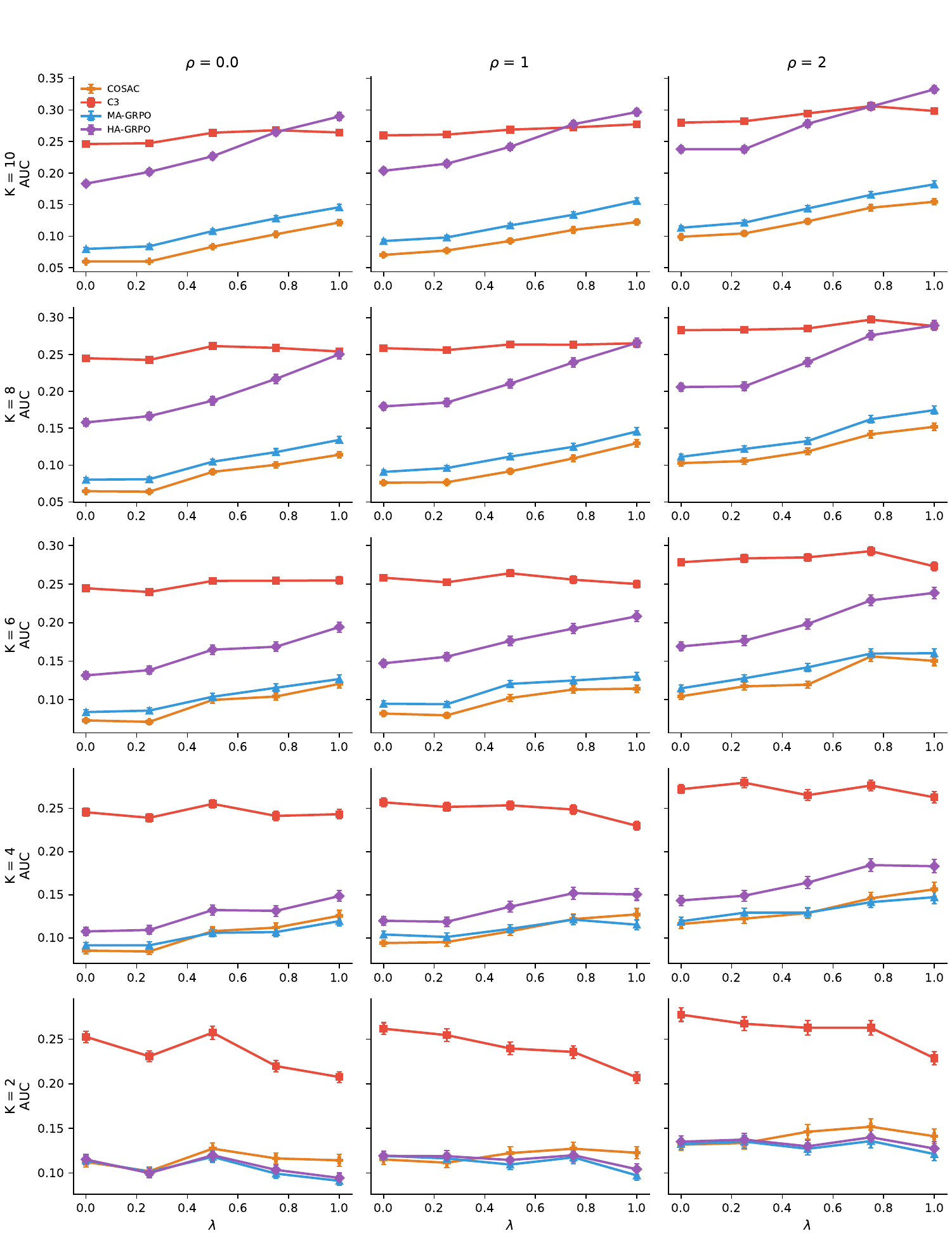}
\caption{Per-cell normalized regret as a function of interaction
strength $\lambda_{\mathrm{int}}$, across team sizes
$K \in \{2, 4, 6, 8, 10\}$ (rows) and non-stationarity levels
$\rho \in \{0, 1, 2\}$ (columns). Each curve averages over 300
seeds; bars show standard errors.}
\label{fig:auc_panel_appendix}
\end{figure}

\section{Direct-effect ablation}
\label{app:direct_ablation_trajectories}

Figure~\ref{fig:direct_ablation} reports regret as a function of
the non-stationarity parameter $\rho$ for COSAC and the
direct-effect-only ablation COSAC-Direct, sweeping $\rho \in
\{0, 5, 10, 20\}$ across $K \in \{2, 4, 8\}$ and
$\lambda_{\mathrm{int}} \in \{0, 0.25, 0.5, 0.75, 1.0\}$.
At $\rho = 0$ the joint policy factorizes and COSAC-Direct matches
or slightly outperforms COSAC, consistent with the factored-policy
limit of Theorem~\ref{thm:seqau}: the indirect-effect Monte Carlo
adds variance without signal when there is no signal to capture.
As $\rho$ grows, the indirect effect becomes the dominant part of
the per-agent advantage; COSAC pulls ahead cleanly by $\rho = 10$
and the gap widens further at $\rho = 20$, across all $K$ and all
$\lambda_{\mathrm{int}}$.

\begin{figure}[H]
\centering
\includegraphics[width=\textwidth]{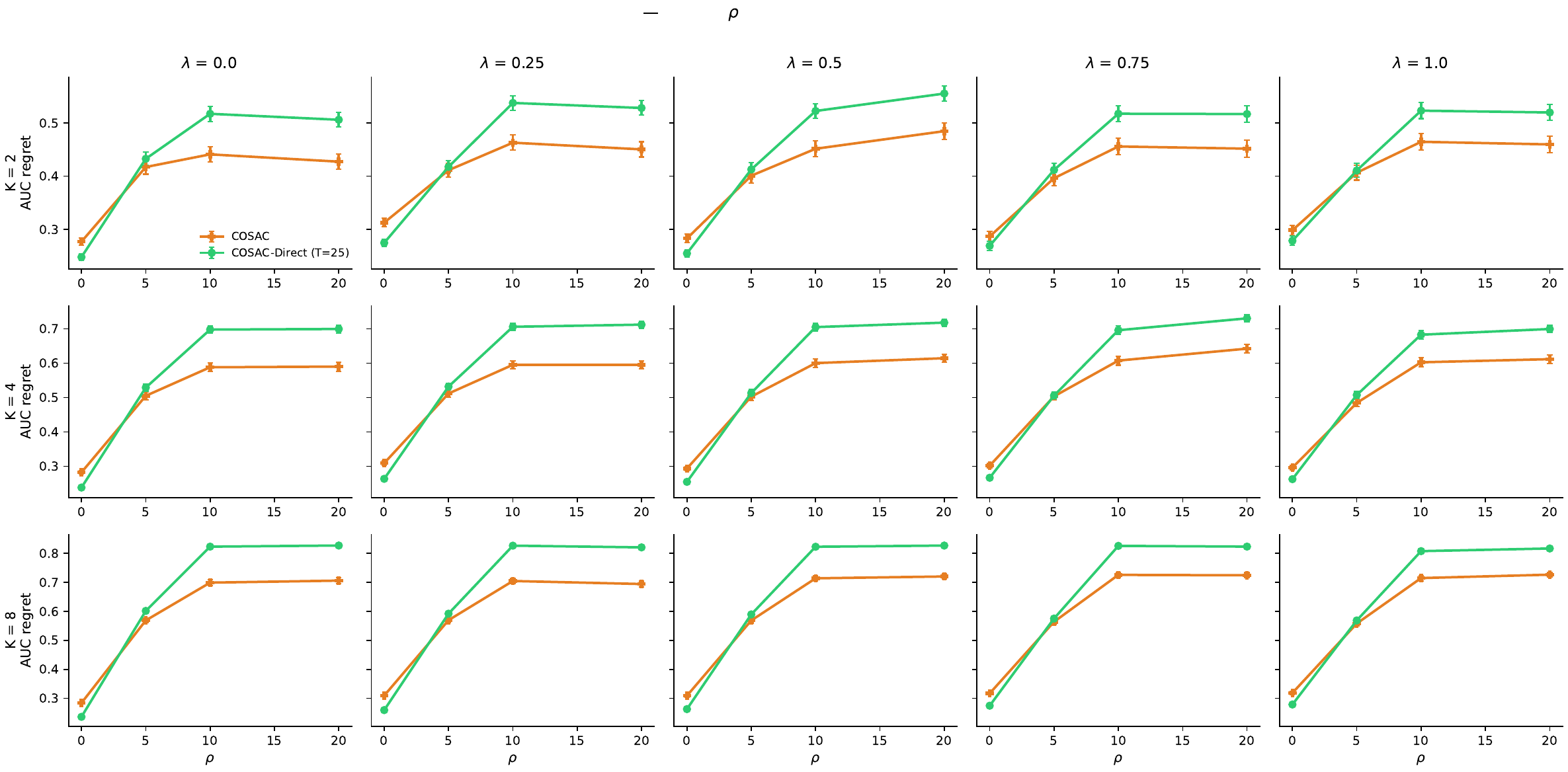}
\caption{COSAC (orange) versus the direct-effect-only ablation
COSAC-Direct (green): regret as a function of the
non-stationarity parameter $\rho$, across team sizes
$K \in \{2, 4, 8\}$ (rows) and interaction strengths
$\lambda_{\mathrm{int}} \in \{0, 0.25, 0.5, 0.75, 1.0\}$ (columns).
Each point averages over 300 seeds; error bars are standard errors.}
\label{fig:direct_ablation}
\end{figure}

\section{Hyperparameters for the bandit experiments}
\label{app:hyperparameters}

Table~\ref{tab:hyperparams} collects the hyperparameter values used
in each experiment. Across all experiments, agents have action sets
of size $A = 4$, the policy-gradient step size is $\eta = 0.3$,
team rewards have observation noise $\sigma = 0.5$, and per-agent
ground-truth advantages are computed by exact marginalization. The
COSAC implementation differs by experiment as noted: the MSE
diagnostics use the closed-form ridge solver, so the
experiments test Theorem~\ref{thm:var} directly; the
optimization experiments use per-agent fictitious-sample averaging
(COSAC-Direct's $\hat\varphi$ component is the closed-form ridge
solver). The optimization comparisons match real-environment
interactions across methods: each method's outer-iteration count
$T_m$ is set so that $T_m \times (\text{per-iteration real-env
cost})$ is constant across methods, anchored to
$T_{\mathrm{c3}} = 25$ for the COSAC-vs-baselines comparison and to
COSAC-Direct with $T_{\mathrm{anchor}} = 25$ for the ablation. C3's $O(K^2)$ per-iteration
replay cost therefore reduces its effective $T$ relative to
MA-GRPO, HA-GRPO, and COSAC. The paper-faithful COSAC evaluates
$\hat f$ (a table lookup of the ridge-fit $\hat\varphi$) on
fictitious continuations; these cost only policy forward passes
and no real-environment evaluations.

\begin{table}[H]
\centering
\small
\caption{Hyperparameters per experiment.}
\label{tab:hyperparams}
\begin{tabular}{lccc}
\toprule
& Estimation MSE & Optimization & Direct-effect ablation \\
\midrule
COSAC estimator & ridge & fict.-sample & fict.-sample \\
Ridge $\lambda$ & $10^{-3}$ & $10^{-1}$ & $10^{-1}$ \\
Fict.\ samples $L$ & --- & 64 & 64 \\
Rollouts $N$ per iter & 16 & 32 & 32 \\
Seeds & 30 & 300 & 300 \\
Team sizes $K$ & $\{2, 4, 8, 16\}$ & $\{2, 4, 6, 8, 10\}$ & $\{2, 4, 8\}$ \\
$\lambda_{\mathrm{int}}$ values & 0 (vs $K$) / 0--1 (sweep at $K=4$) & 0--1 & 0--1 \\
$\rho$ values & \{1\} & $\{0, 1, 2\}$ & $\{0, 5, 10, 20\}$ \\
HA-GRPO IS clip & 5 & 5 & --- \\
C3 replay count $M_{\mathrm{c3}}$ & 2 & 2 & --- \\
Iteration budget & single pass & real-env-matched & real-env-matched \\
\bottomrule
\end{tabular}
\end{table}

\section{ARC experiment details}
\label{app:arc_details}

This appendix collects details of the ARC reasoning experiment
of Section~\ref{sec:exp_arc} that did not fit in the main text.

\subsection{Reward function}
\label{app:arc_reward}

Let $v_1, \ldots, v_K \in \{A, B, C, D, \mathrm{none}\}$
denote the canonicalized per-agent selections (the answer
letter extracted from each agent's utterance, or
$\mathrm{none}$ when no valid letter is produced) on a
question with ground-truth label
$g \in \{A, B, C, D\}$. We define two scalar reward
components:
\begin{align*}
r_{\mathrm{ind}}(v, g) &= \frac{1}{K} \sum_{k=1}^{K}
\mathbf{1}[v_k = g], \\
r_{\mathrm{cons}}(v, g) &= \mathbf{1}[\mathrm{mode}(v) = g]
\cdot \mathrm{agree}(v),
\end{align*}
where $\mathrm{mode}(v)$ is the most-frequent value in $v$
(ties broken arbitrarily) and $\mathrm{agree}(v) = \frac{1}{K}
\sum_k \mathbf{1}[v_k = \mathrm{mode}(v)]$ is the agreement
fraction. The team reward is the convex blend
\[
r(v, g; \alpha) = \alpha \cdot r_{\mathrm{ind}}(v, g) +
(1 - \alpha) \cdot r_{\mathrm{cons}}(v, g),
\]
controlled by the shaping coefficient $\alpha \in [0, 1]$:
$\alpha = 1$ is the independence reward, $\alpha = 0$ is the
correct-consensus reward, and $\alpha = 0.5$ is the shaped
blend used in Section~\ref{sec:exp_arc}'s
$\alpha \in \{0, 0.5, 1\}$ sweep. All three components
$r_{\mathrm{ind}}, r_{\mathrm{cons}}, r \in [0, 1]$.

\subsection{Selected Question-Answers}
\label{app:arc_prompts}
Following are the 10 ARC question-answers used for the training purposes:
\begin{enumerate}
\item \textbf{(\texttt{Mercury\_7041633})}
  Which of these human activities does not contribute to the
  extinction of species?
  \\(A) hunting; (B) habitat destruction; (C) restoration ecology;
  (D) introduced nonnative species.
  \\\emph{Gold:} C.

\item \textbf{(\texttt{Mercury\_7085243})}
  Which property of a mineral can be determined just by looking
  at it?
  \\(A) luster; (B) mass; (C) weight; (D) hardness.
  \\\emph{Gold:} A.

\item \textbf{(\texttt{Mercury\_7024483})}
  Which of these is not an inherited trait in humans?
  \\(A) height; (B) hair color; (C) skin color; (D) intelligence.
  \\\emph{Gold:} D.

\item \textbf{(\texttt{AKDE\&ED\_2008\_4\_21})}
  Four materials are put into small containers. These materials
  are then moved from the small containers into larger containers.
  Which material will spread out to completely fill a larger
  container?
  \\(A) air; (B) ice; (C) sand; (D) water.
  \\\emph{Gold:} A.

\item \textbf{(\texttt{Mercury\_7212345})}
  Acid rain has a pH below 5.6. This rain can damage soil, lakes,
  crops, and buildings. Acid rain is caused by all of the
  following except
  \\(A) industrial emissions from factories;
  (B) coal that is burned to produce heat and power;
  (C) automobile exhaust;
  (D) nuclear power plants that produce radiation.
  \\\emph{Gold:} D.

\item \textbf{(\texttt{MCAS\_1998\_4\_3})}
  Which of the following is a trait that a dog does NOT inherit
  from its parents?
  \\(A) the length of its fur; (B) the shape of its nose;
  (C) the size of its appetite; (D) the color of its fur.
  \\\emph{Gold:} C.

\item \textbf{(\texttt{MCAS\_1999\_8\_34})}
  If a new moon occurred on June 2, when will the next new moon
  occur?
  \\(A) June 30; (B) June 28; (C) June 23; (D) June 15.
  \\\emph{Gold:} A.

\item \textbf{(\texttt{MDSA\_2008\_5\_30})}
  On Earth, water can be a solid, a liquid, or a gas. Which
  energy source has the greatest influence on the state of
  matter of water?
  \\(A) the sun; (B) the wind; (C) ocean currents;
  (D) the metal core.
  \\\emph{Gold:} A.

\item \textbf{(\texttt{TAKS\_2009\_8\_9})}
  Which of these would most likely improve the air quality in
  large Texas cities?
  \\(A) Limiting the number of cars on the roads;
  (B) Switching to wood stoves for home heating;
  (C) Requiring large vehicles to use diesel fuel;
  (D) Maintaining filters in large buildings.
  \\\emph{Gold:} A.

\item \textbf{(\texttt{Mercury\_7084298})}
  Two elements in the same group on the Periodic Table of the
  Elements are most similar in their
  \\(A) atomic mass; (B) number of protons; (C) atomic size;
  (D) chemical reactivity.
  \\\emph{Gold:} D.
\end{enumerate}

\subsection{Hyperparameters and training protocol}
\label{app:arc_hyperparameters}

\paragraph{Models and hardware.}
Each of the four agents is a Qwen3-0.6B base model (not the
Instruct variant)~\citep{qwen3technicalreport} with a per-agent
LoRA adapter~\citep{hu2021loralowrankadaptationlarge}. We
disable Qwen3's thinking mode by passing
\texttt{apply\_chat\_template(enable\_thinking=False)} when
constructing prompts. The four agents are distributed across
4$\times$ NVIDIA TITAN X Maxwell GPUs (sm\_52, 12\,GB each),
one agent per GPU. Computation runs in bfloat16 with a cast to
fp32 on the TITAN X cards (sm\_52 has no native bf16/fp16 RL
support).

\paragraph{LoRA configuration (per-agent).}
LoRA rank $64$, alpha $32$, dropout $0$. Target modules:
\texttt{q\_proj}, \texttt{k\_proj}, \texttt{v\_proj},
\texttt{o\_proj}, \texttt{gate\_proj}, \texttt{up\_proj},
\texttt{down\_proj} (all attention and MLP linear projections).

\paragraph{Training schedule.}
Outer iterations $T = 25$ (one prompt per seed, repeated each
iter). Base rollouts per iteration $M = 8$ (overridden for C3;
see budget-matching below). V/Q-set samples per agent $L = 4$
(COSAC only). PPO inner steps $1$ per outer iteration per
agent. LoRA learning rate $1.0\times 10^{-6}$. PPO clip
$\varepsilon = 0.2$, with verl-style dual clip $3.0$ for
negative advantages. KL-to-reference penalty
$\beta = 1.0\times 10^{-3}$. Optimizer AdamW with weight decay
$0.01$ and $(\beta_1, \beta_2) = (0.9, 0.99)$.

\paragraph{Generation.}
\texttt{max\_new\_tokens} $= 128$, sampling temperature $1.0$
(side probes at $1.5$ and $2.0$ were not used in the reported
results), \texttt{top\_p} $= 1.0$ (no top-$p$ filtering),
sampling enabled (\texttt{do\_sample} $=$ True).

\paragraph{Task configuration.}
$K = 4$ agents. Downstream agents see every upstream agent's
full reasoning and selected option
(\texttt{pass\_reasoning\_downstream} $=$ True). The
reward-shaping coefficient is swept over
$\alpha \in \{0.0, 0.5, 1.0\}$, instantiating the
correct-consensus, shaped, and independence rewards
respectively (Appendix~\ref{app:arc_reward}).

\paragraph{Advantage handling.}
We disable advantage standardization in the final results, so
as not to confound the effect of standardization on training.

\paragraph{COSAC ridge and clustering.}
For ARC-MC the per-agent clusters are the canonical answer
categories $\{A, B, C, D, \mathrm{none}\}$. The COSAC ridge
regularization is $\lambda_{\mathrm{ridge}} = 1.0$.

\paragraph{C3 budget-matching.}
C3 issues $M \cdot (1 + K \cdot M_{\mathrm{c3}})$ environment
calls per iteration, against $M$ for the other methods. To
approximately match the other methods' budget at $M = 8$, we
use $M_{\mathrm{c3}} = 1$ and override C3's $M$ to $2$, giving
$2 \cdot (1 + 4 \cdot 1) = 10$ environment calls per iteration
(the closest viable integer match to $8$).

\paragraph{Seeds.}
We use 10 seeds (one prompt per seed).

\subsection{Prompt structure}
\label{app:arc_prompt_structure}

All four agents share a single system prompt (parameterized
only by team size $K$, set to $K = 4$). The per-turn user
message is templated with the question, the four answer
choices, the upstream agents' contributions, and the acting
agent's index. We reproduce both verbatim below.

\paragraph{System prompt (shared across all $K$ agents).}
\begin{verbatim}
You answer multiple-choice science questions on a team of {K}
agents. You will be given the question, the four choices (A,
B, C, D), and the full reasoning + answer of any teammates
who already voted.

IMPORTANT: Think independently. Your teammates may be wrong.
Do NOT just copy their answer. Form your own view from the
question and the choices, then disagree with them if you
believe they are wrong. Agreement is fine when you have
actually concluded the same answer for yourself.

Output format (strict):
- ONE short reason sentence (<= 20 words). Do not restate the
  question or list the choices.
- Then a final line of EXACTLY the form: 'Answer: X' where X
  is one of A, B, C, D.
Do not output anything after the answer line.
\end{verbatim}

\paragraph{User message (per-turn, agent $k$).}
\begin{verbatim}
Question: {question}

Choices:
  A) {A_text}
  B) {B_text}
  C) {C_text}
  D) {D_text}

{prior_block}

You are agent {agent_idx}. Decide your own answer first; only
agree with teammates if you independently reach the same
choice. Reason in <= 20 words, then output 'Answer: X' on its
own line.
\end{verbatim}

The placeholder \texttt{\{prior\_block\}} is filled with the
string \texttt{"No teammates have voted yet."} for agent 0,
and with the upstream agents' full reasoning and answer
lines, in order, for agents $k \geq 1$:

\begin{verbatim}
Teammates' contributions so far:

[Agent 0]
<agent 0's full reasoning + answer line>

[Agent 1]
<agent 1's full reasoning + answer line>

...
\end{verbatim}

\subsection{Worked example}
\label{app:arc_adv_quality}

\paragraph{Worked example.}
Below is a representative rollout from one training iteration
on question 4 of Appendix~\ref{app:arc_prompts}
(\texttt{AKDE\&ED\_2008\_4\_21}, gold answer A), illustrating
how the four agents respond in sequence. Each agent reasons
briefly and selects an answer, and downstream agents can
either agree or disagree with the upstream agents'
selections.

\begin{verbatim}
--- agent 0 ---
Answer: A
Reason: Materials like air expand, but ice is solid, so it
won't fill a larger container completely. Only a solid
expands to fill space. Answer: A

--- agent 1 ---
Answer: D
Reason: Water expands when moved to larger containers and
fills them completely. Answer: D

--- agent 2 ---
Answer: D
Reason: Water expands and fills a larger container completely.
Answer: D

--- agent 3 ---
Answer: A.
Reason: Air expands in enclosed spaces, while ice can't fill
a container completely. Answer: A
\end{verbatim}

The team reward on this rollout is $0.5$ (under both
$r_{\mathrm{ind}}$ and $r_{\mathrm{cons}}$). The diversity of
responses across agents (some correct, some not, some
agreeing with the upstream agent and some disagreeing) makes
it difficult for a method like MA-GRPO to attribute
meaningful per-agent signal from this mixed rollout.

\paragraph{Per-agent advantage estimates.}
On this rollout, MA-GRPO and COSAC produce qualitatively
different per-agent advantage estimates:

\begin{center}
\begin{tabular}{lcccc}
\toprule
                          & Agent 0  & Agent 1  & Agent 2  & Agent 3 \\
\midrule
MA-GRPO common advantage  & $-0.016$ & $-0.016$ & $-0.016$ & $-0.016$ \\
COSAC per-agent advantage & $+0.040$ & $-0.109$ & $-0.109$ & $+0.138$ \\
\bottomrule
\end{tabular}
\end{center}

MA-GRPO collapses the entire team reward into a single scalar
($-0.016$, close to zero) shared by all four agents,
providing essentially no position-specific learning signal.
COSAC, by contrast, separates per-agent contributions across
the $M = 8$-rollout batch: it attributes positive advantage
to the agents whose selections are correct (agents 0 and 3,
which chose A) and negative advantage to the agents whose
selections are wrong (agents 1 and 2, which chose D). Each
agent therefore receives a learning signal aligned with the
correctness of its own selection, which MA-GRPO's single
team-level scalar cannot provide.

\section{Details of LLM Usage}
\label{app:llm_usage}
We used LLMs to survey related work and identify gaps in existing techniques, to assist with code writing, to help verify the proofs of the theorems presented in this work, and to polish the manuscript.

\end{document}